\newcommand*{\defeq}{\stackrel{\text{def}}{=}}
\newcommand{\argmin}{\mathop{\mathrm{argmin}}\limits}
\newcommand{\KL}[2]{\text{KL}\left(#1\Vert #2\right)}
\theoremstyle{plain}
\newtheorem{theorem}{Theorem}[section]
\newtheorem{proposition}[theorem]{Proposition}
\newtheorem{corollary}[theorem]{Corollary}
\theoremstyle{definition}
\theoremstyle{remark}
\titleformat{\subsubsection}[runin]{\bfseries}{\thesubsubsection}{1em}{}{}
\titlespacing{\subsubsection}{0pt}{0pt}{1em}
\icmltitlerunning{Categorical Schrödinger Bridge Matching}
\begin{document}

\twocolumn[
\icmltitle{Categorical Schrödinger Bridge Matching}

\icmlsetsymbol{equal}{*}
\begin{icmlauthorlist}
    \icmlauthor{Grigoriy Ksenofontov}{sk,mipt}
    \icmlauthor{Alexander Korotin}{sk,airi}
\end{icmlauthorlist}

\icmlaffiliation{sk}{Skoltech, Moscow, Russia}
\icmlaffiliation{mipt}{MIPT, Moscow, Russia}
\icmlaffiliation{airi}{AIRI, Moscow, Russia}

\icmlcorrespondingauthor{Grigoriy Ksenofontov}{g.ksenofontov@skoltech.ru}
\icmlcorrespondingauthor{Alexander Korotin}{a.korotin@skoltech.ru}

\icmlkeywords{Schrödinger Bridge, Entropic Optimal Transport, Optimal transport, Unpaired Learning, Discrete space}

\vskip 0.3in
]

\printAffiliationsAndNotice{}

\begin{abstract}
    The Schrödinger Bridge (SB) is a powerful framework for solving generative modeling tasks such as unpaired domain translation. Most SB-related research focuses on continuous data space $\mathbb{R}^{D}$ and leaves open theoretical and algorithmic questions about applying SB methods to discrete data, e.g, on finite spaces $\mathbb{S}^{D}$. Notable examples of such sets $\mathbb{S}$ are codebooks of vector-quantized (VQ) representations of modern autoencoders, tokens in texts, categories of atoms in molecules, etc. In this paper, we provide a theoretical and algorithmic foundation for solving SB in discrete spaces using the recently introduced Iterative Markovian Fitting (IMF) procedure. Specifically, we theoretically justify the convergence of discrete-time IMF (D-IMF) to SB in discrete spaces. This enables us to develop a practical computational algorithm for SB, which we call Categorical Schrödinger Bridge Matching (CSBM). We show the performance of CSBM via a series of experiments with synthetic data and VQ representations of images. The code of CSBM is available at \href{https://github.com/gregkseno/csbm}{this repository}.
\end{abstract}

\section{Introduction}
\label{sec:intro}

The Schrödinger bridge \citep[SB]{schrodinger1931umkehrung} problem has recently attracted the attention of the machine learning community due to its relevance to modern challenges in generative modeling and unpaired learning. Recently, a variety of methods have been proposed to solve SB in \textit{continuous spaces}; see \citep{gushchin2023building} for a recent survey.

One modern approach to solving SB is the Iterative Markovian Fitting (IMF) framework \citep{peluchetti2023diffusion,shi2023diffusion,gushchin2024adversarial}. Specifically, within this framework, the discrete-time IMF procedure \citep[D-IMF]{gushchin2024adversarial} has shown promising results in certain unpaired learning problems, enabling faster generation (inference) times than its predecessors.

Unfortunately, the D-IMF procedure heavily relies on certain theoretical properties of particular SB setups in continuous spaces. At the same time, a vast amount of real-world data is either \textit{discrete by nature}, such as texts \citep{austin2021structured, gat2024discrete}, molecular graphs \citep{vignac2022digress, qin2024defog, luo2024crystalflow}, sequences \citep{campbell2024generative}, etc., or \textit{discrete by construction} like vector-quantized representations of images and audio \citep{van2017neural, esser2021taming}. These cases highlight a fundamental limitation, as D-IMF is not directly applicable to such data. In this work, we address this gap by making the following \textbf{contributions:}
\begin{itemize}[leftmargin=*]
    \item \textbf{Theory.} We provide the theoretical grounds for applying the D-IMF to solve the SB problem in discrete spaces.
    \item \textbf{Practice.} We provide a computational algorithm to implement the D-IMF in practice for discrete spaces.
\end{itemize}

\paragraph{Notations.} Consider a \emph{state space} $\mathcal{X}$ and a \emph{time set} $\{t_n\}_{n=0}^{N+1}$, where $0=t_{0}<t_{1}<\dots<t_{N} < t_{N+1}=1$ are $N \geq 1$ time moments. The space $\mathcal{X}^{N+2}$ is referred to as the \emph{path space} and represents all possible trajectories $(x_0, x_{\text{in}}, x_{t_{N+1}})$, where $x_{\text{in}} \defeq (x_{t_1}, \dots, x_{t_N})$ corresponds to the intermediate states. Let $\mathcal{P}(\mathcal{X}^{N+2})$ be the space of probability distributions over paths. Each $q\in\mathcal{P}(\mathcal{X}^{N+2})$ can be interpreted as a discrete in time $\mathcal{X}$-valued stochastic process. We use $q(x_0, x_{\text{in}}, x_{t_{N+1}})$ to denote its density at $(x_0, x_{\text{in}}, x_{t_{N+1}}) \in \mathcal{X}^{N+2}$ and use $q(\cdot|\cdot)$ to denote its conditional distributions, e.g., $q(x_1|x_0)$, $q(x_{\text{in}}|x_0,x_1)$. Finally, we introduce $\mathcal{M}(\mathcal{X}^{N+2}) \subset \mathcal{P}(\mathcal{X}^{N+2})$ as the set of all \emph{Markov processes} $q$, i.e., those processes which satisfy the equality $q(x_0, x_{\text{in}}, x_{t_{N+1}})=q(x_0)\prod_{n=1}^{N+1}q(x_{t_n}|x_{t_{n-1}})$.

\section{Background and Related Works}
In this section, we review the formulation and existing approaches to the Schrödinger Bridge (SB) problem, with a focus on its generative applications. We begin with the static SB problem (\wasyparagraph\ref{sec:static-sb}). Next, we highlight the challenges of extending SB methods from continuous to discrete state spaces (\wasyparagraph\ref{sec:disc-cont-space-sb}). We proceed to the dynamic SB formulation, motivating its importance in practice (\wasyparagraph \ref{sec:dynamic-sb}). This leads to the Iterative Markovian Fitting (IMF) procedure and its discrete-time variant D-IMF (\wasyparagraph\ref{sec:imf}). Finally, we summarize the known characterizations of SB (Table \ref{tab:sb_setups}) and identify the object of our study, namely, establishing theoretical guarantees for the discrete state and time setting (\wasyparagraph \ref{sec:object-of-study}).

\subsection{The Static Schrödinger Bridge Problem}
\label{sec:static-sb}
Consider two distributions $p_0, p_1 \in \mathcal{P}(\mathcal{X})$ and all distributions $q\in\mathcal{P}(\mathcal{X}^{2})$ whose marginal distributions are $p_0,p_1$, respectively. The set of such distributions $\Pi(p_0, p_1) \subset \mathcal{P}(\mathcal{X}^2)$ is called the set of \emph{transport plans}. In addition, suppose we are given a reference distribution $q^{\text{ref}} \in \mathcal{P}(\mathcal{X}^2)$.

\
\emph{The Static Schrödinger Bridge (SB) problem} \citep{schrodinger1931umkehrung, leonard2013survey} consists of finding the transport plan $q \in \Pi(p_0, p_1)$ closest to $q^{\text{ref}}$ in terms of the Kullback–Leibler (KL) divergence:
\begin{equation}
    \label{eq:static_sb}
    q^*(x_0, x_1) = \argmin_{q \in \Pi(p_0, p_1)}\text{KL}(q(x_0, x_1)||q^{\text{ref}}(x_0, x_1)),
\end{equation}
With mild assumptions on components of the problem ($\mathcal{X}, p_0, p_1, q^{\text{ref}}$), the solution $q^*$ to this problem uniquely exists; it is called the static SB.

Notably, the static SB problem is equivalent to another well-celebrated problem -- the \emph{Entropic Optimal Transport} \citep[EOT]{cuturi2013sinkhorn}. Indeed, \eqref{eq:static_sb} can be written as
\begin{eqnarray}
    \min_{q \in \Pi(p_0, p_1)}  \mathbb{E}_{q(x_0, x_1)}\log \frac{q(x_0,x_1)}{q^{\text{ref}}(x_0,x_1)}
    =
    \nonumber
    \\
    \min_{q \in \Pi(p_0, p_1)}\big\{ \mathbb{E}_{q(x_0, x_1)}\underbrace{\left[-\log q^{\text{ref}}(x_0, x_1)\right]}_{\defeq c(x_0,x_1)} - H(q)\big\}=
    \nonumber
    \\
    \min_{q \in \Pi(p_0, p_1)} \left\{ \mathbb{E}_{q(x_0, x_1)}c(x_0,x_1) - H(q)\right\},
    \label{eq:eot}
\end{eqnarray}
\color{black}
where $H(q)$ denotes the entropy of transport plan $q(x_0, x_1)$ and $c(x_0,x_1)$ is a transport cost function.

\subsection{Practical Learning Setup of SB}
\label{sec:practical-generative-setup}

Over the last decade, researchers have approached SB/EOT problems in various studies because of their relevance to real-world tasks \citep{peyre2019computational,gushchin2023building}. In our paper, we consider the following learning setup, which is usually called the \textit{generative} setup.

\begin{tcolorbox}[colback=gray!20, colframe=gray!20, arc=2mm, boxrule=0pt, width=1\linewidth, boxsep=-1pt]
We assume that a learner is given empirical datasets $\{x^m_0\}_{m=1}^M \subset \mathcal{X}$ and $\{x^k_1\}_{k=1}^K \subset \mathcal{X}$, which are i.i.d. samples from unknown data distributions $p_0$ and $p_1$, respectively. The goal is to leverage these samples to find a solution $\widehat{q}\approx q^{*}$ to the SB problem \eqref{eq:eot} between the distributions $p_0, p_1$. The solution should permit the \textbf{out-of-sample estimation}, i.e., for any $x_{0}^{\text{new}}$, one should be able to generate new $x_{1}^{\text{new}}\sim \widehat{q}(x_1|x_{0}^{\text{new}})$.
\end{tcolorbox}

In the related literature, this setup is mainly explored in the context of unpaired (unsupervised) domain translation. In this task, the datasets consist of samples from two different data distributions (domains), and the goal is to learn a transformation from one domain to the other \citep[Figure 2]{zhu2017unpaired}. The problem is inherently ill-posed because, theoretically, there may be multiple possible transformations. In many applications of unpaired learning, it is crucial to preserve semantic information during the translation, for example, the image content in image-to-image translation. Therefore, SB and EOT are suitable tools for this task as they allow controlling the properties of the learned translation by selecting the reference distribution $q^{\text{ref}}$ in \eqref{eq:static_sb} or the transport cost $c$ in \eqref{eq:eot}. Over the last several years, many such SB/EOT methods for unpaired learning have been developed; see \citep{gushchin2023building} for a survey.

\subsection{Discrete and Continuous State Space $\mathcal{X}$ in SB}
\label{sec:disc-cont-space-sb}

Most methods \citep{mokrov2023energy,de2021diffusion,vargas2021solving,gushchin2023entropic,gushchin2024adversarial,korotin2024light,gushchin2024light,shi2023diffusion,liu2022deep,chen2022likelihood} use neural networks to approximate $q^{*}$ and \textit{specifically} focus on solving SB in \textbf{continuous state spaces}, e.g., ${\mathcal{X}=\mathbb{R}^{D}}$. This allows us to apply SB to many unpaired translation problems, e.g., the above-mentioned image-to-image translation or biological tasks related to the analysis and modeling of single-cell data \citep{pariset2023unbalanced,tong2024simulation}.

Despite advances in computational SB methods, significant challenges remain when adapting these generative approaches to \textbf{discrete state spaces} $\mathcal{X}$:
\begin{enumerate}[leftmargin=*]
    \item Their underlying methodological principles are mostly incompatible with discrete spaces $\mathcal{X}$. For example, \citep{shi2023diffusion,gushchin2023entropic,vargas2021solving,liu2022deep} use stochastic differential equations (SDE) which are not straightforward to generalize and use in discrete spaces; \citep{mokrov2023energy} heavily relies on MCMC sampling from unnormalized density which is also a separate challenge for large discrete spaces $\mathcal{X}$; \citep{gushchin2024light,korotin2024light,gushchin2024adversarial} theoretically work only for the EOT problem with the quadratic cost on $\mathcal{X}=\mathbb{R}^{D}$, etc. 
    \item Extending any generative modeling techniques to discrete data is usually a challenge. For example, models such as GANs \citep{goodfellow2014generative} require backpropagation through the generator -- for discrete data is usually done via heuristics related to the Gumbel trick \citep{jang2017categorical}; flow matching methods \citep{liu2022flow} can be used for discrete data \citep{gat2024discrete} but require numerous methodological changes, etc.
\end{enumerate}

At the same time, a significant portion of modern data is inherently discrete, as discussed in \wasyparagraph\ref{sec:intro}. Despite its prevalence, the Schrödinger Bridge framework for discrete spaces remains underdeveloped, motivating our focus.

\begin{tcolorbox}[colback=gray!20, colframe=gray!20, arc=2mm, boxrule=0pt, width=1\linewidth, boxsep=-1pt]
We assume that the state space $\mathcal{X}$ is discrete and represented as $\mathcal{X}=\mathbb{S}^{D}$. Here $\mathbb{S}$ is a finite set, and for convenience, we say that it is the space of categories, e.g., $\mathbb{S}=\{1,2,\dots, S\}$. One may also consider $\mathcal{X}=\mathbb{S}_{1}\times \dots \times \mathbb{S}_{D}$ for $D$ categorical sets. This does not make any principal difference, so we use $\mathbb{S}_{1}=\dots=\mathbb{S}_{D}$ to keep the paper's exposition simple.
\end{tcolorbox}

\paragraph{Discrete EOT Methods.} We would like to mention, for the sake of completeness, that there is a broad area of research known as discrete EOT, which might appear to be closely related to our work. It includes, e.g., the well-celebrated Sinkhorn algorithm \citep{cuturi2013sinkhorn} and gradient-based methods \citep{dvurechensky2018computational,dvurechenskii2018decentralize}. However, such algorithms \textbf{are not relevant} to our work, as they consider a different setting from the generative one (\wasyparagraph\ref{sec:practical-generative-setup}) and target different problems. Specifically, discrete EOT assumes that the available data samples are themselves discrete distributions, i.e., $p_0 =\frac{1}{M}\sum^M_{m=1}\delta_{x_0^m},$ $p_1 = \frac{1}{K}\sum^K_{k=1}\delta_{x^k_0}$ (the weights may be non-uniform), and the goal is to find a bi-stochastic matrix $\in\mathbb{R}^{M\times K}$ (a.k.a. the discrete EOT plan) which optimally matches the given samples. Since this matrix is a discrete object, such methods are called discrete. Works \citep{hutter2021minimax, pooladian2021entropic, manole2021plugin, deb2021rates} aim to advance discrete EOT methods to be used in generative setups by providing out-of-sample estimators. However, they work only for continuous state space $\mathcal{X}=\mathbb{R}^{D}$. It remains an open question whether discrete solvers can be used for generative scenarios in discrete space $\mathcal{X}=\mathbb{S}^{D}$.

\subsection{From Static to Dynamic SB Problems}
\label{sec:dynamic-sb}
The static SB problem \eqref{eq:static_sb} can be thought of as a problem of finding a stochastic process acting at times $t=0,1$. Usually, one considers an extension of this problem by incorporating additional time moments \citep{de2021diffusion,gushchin2024adversarial}. Let us introduce $N \geq 1$ intermediate time points $0 = t_0 < t_1 < \dots < t_N < t_{N+1} = 1$, extending $q$ to these moments. Consequently, $q$ becomes a process over the states at all time steps, i.e., $q \in \mathcal{P}(\mathcal{X}^{N+2})$. Similarly to the static formulation \eqref{eq:static_sb}, let us be given marginal distributions $p_0,p_1 \in \mathcal{P}(\mathcal{X})$ with a reference process $q^{\text{ref}}\in\mathcal{P}(\mathcal{X}^{N+2})$. Then the \emph{dynamic Schrödinger Bridge} problem is
\begin{multline}
    \label{eq:disc_dyn_sb}
    \min_{q \in \Pi_{N}(p_0, p_1)}\text{KL}(q(x_0, x_{\text{in}}, x_1)||q^{\text{ref}}(x_0, x_{\text{in}}, x_1)),
\end{multline}
where $\Pi_{N}(p_0, p_1) \subset \mathcal{P}(\mathcal{X}^{N+2})$ is a set of all discrete-time stochastic processes in which initial and terminal marginal distributions are $p_0$ and $p_1$. In turn, the solution $q^{*}$ to this itself becomes an $\mathcal{X}$-valued stochastic process. Note that:
\begin{eqnarray}
    \label{eq:disc_disintegration}
    \text{KL}(q(x_0, x_{\text{in}}, x_1)|| q^{\text{ref}}(x_0, x_{\text{in}}, x_1)) = 
    \nonumber
    \\
    \text{KL}(q(x_0, x_1)||q^{\text{ref}}(x_0, x_1)) + 
    \nonumber
    \\ \mathbb{E}_{q(x_0,x_1)} \left[\text{KL}(q(x_{\text{in}}|x_0, x_1)||q^{\text{ref}}(x_{\text{in}}|x_0, x_1)) \right]. 
    \label{kl-reciprocal-zero}
\end{eqnarray}

Since conditional distributions $q(x_{\text{in}}|x_0, x_1)$ can be chosen independently of $q(x_0, x_1)$, we can consider $q(x_{\text{in}}|x_0, x_1) = q^{\text{ref}}(x_{\text{in}}|x_0, x_1)$. It follows that the second term becomes $0$ for every $x_0,x_1$. As a result, we see that the joint distribution $q^{*}(x_0,x_1)$ for time $t=0,1$ of the dynamic SB \eqref{eq:disc_dyn_sb} is the solution to the static SB \eqref{eq:static_sb} for the reference distribution given by the $q^{\text{ref}}(x_0,x_1)$.

At this point, a reader may naturally wonder: \textit{why does one consider the more complicated Dynamic SB, especially considering that it boils down to simpler Static SB}?

In short, the dynamic solution allows for leveraging the so-called reciprocal and Markov properties of $q^{*}$ (it is discussed below), which can be effectively utilized in developing computational algorithms for SB \citep{liu20232,shi2023diffusion,peluchetti2023diffusion}. In fact, \textbf{most} of the computational methods listed at the beginning of \wasyparagraph\ref{sec:disc-cont-space-sb} operate with the dynamic SB formulation. While some methods \citep{de2021diffusion,gushchin2024adversarial} consider formulation \eqref{eq:disc_dyn_sb} with discrete time and finite amount $N$ of time moments, \citep{shi2023diffusion,chen2022likelihood,gushchin2024light} work with continuous time $t\in [0,1]$. \textbf{Informally}, one may identify it with discrete time but $N=\infty$. In discussions, we will refer to the continuous time case this way in the rest of the paper \textit{to avoid unnecessary objects and notations}. 

\begin{tcolorbox}[colback=gray!20, colframe=gray!20, arc=2mm, boxrule=0pt, width=1\linewidth, boxsep=-1pt]
The scope of our paper is exclusively the discrete-time in dynamic SB ($N<\infty$) as it is more transparent and feasible to analyze.
\end{tcolorbox}

To conclude this section, we introduce an important definition that is specifically relevant to the dynamic SB.

\paragraph{Reciprocal Processes.} A process $r \in \mathcal{P}(\mathcal{X}^{N+2})$ is called a reciprocal process with respect to the reference process $q^{\text{ref}}$ if its conditional distributions given the endpoints $x_0, x_1$ match those of the reference process, i.e.:
\begin{equation*}
    r(x_{\text{in}} \mid x_0, x_1) = q^{\text{ref}}(x_{\text{in}} \mid x_0, x_1).
\end{equation*}
    
The set of all reciprocal processes for the reference process $q^{\text{ref}}$ is denoted by $\mathcal{R}^{\text{ref}}(\mathcal{X}^{N+2}) \subset \mathcal{P}(\mathcal{X}^{N+2})$.

\subsection{Iterative Markovian Fitting (IMF) Procedure}
\label{sec:imf}
\begin{table*}[t]
    \centering
    \begin{tblr}{colspec={Q[c,m]|[0.7pt]Q[c,m]|Q[c,m]|Q[c,m]|Q[c,m]}}
        \toprule
         & \SetCell[c=2]{c} {\textbf{Continuous time} \\ ($N=\infty$)} & & \SetCell[c=2]{c,m} {\textbf{Discrete time} \\ ($N<\infty$)} \\
         \cline{2-5}
         & {\textit{Theory} \\ (SB characterization)} & {\textit{Practice} \\ (SB algorithm)} & {\textit{Theory} \\ (SB characterization)} & {\textit{Practice} \\ (SB algorithm)} \\
        \midrule
        {\textbf{Continuous space} \\ $\mathcal{X}=\mathbb{R}^D$} & \SetCell[r=2]{c,m} {Theorem 3.2 \\  \citep{leonard2014reciprocal}} & {DSBM \wasyparagraph 4 \\ \citep{shi2023diffusion}} & {Theorem 3.1 \\ \citep{gushchin2024adversarial}} & {ASBM \wasyparagraph 3.5 \\ \citep{gushchin2024adversarial}}  \\ 
        \cline{1,3-5}
        {\textbf{Discrete space} \\ $\mathcal{X}=\mathbb{S}^D$} &  & {DDSBM \wasyparagraph 3.1 \\ \citep{kim2024discrete}} & \SetCell[c=2]{c,m} \textbf{Our work} (\wasyparagraph\ref{sec-main}) \\
        \bottomrule
    \end{tblr}
    \caption{A summary of SB problem setups and existing (D-)IMF-related results. The table lists theoretical statements characterizing the SB solution (\textit{as the unique both Markovian and reciprocal process between two given distributions}) which allows to apply the (D-)IMF  procedure to provably get the SB solution $q^{*}$, see \citep[Theorem 8]{shi2023diffusion}. The table also lists related computational algorithms.}
    \label{tab:sb_setups}
\end{table*}

In practice, the most commonly considered case of dynamic SB is when $q^{\text{ref}}\in\mathcal{M}(\mathcal{X}^{N+2})\subset \mathcal{P}(\mathcal{X}^{N+2})$, i.e., $q^{\text{ref}}$ is a \textit{Markovian process}. In this case, the solution $q^{*}$ to SB is also known to be a Markovian process. This feature motivated the researchers to develop the \textit{Iterative Markovian Fitting} (IMF) procedure for solving SB based on Markovian and reciprocal projections of stochastic processes.

Originally, the IMF procedure \citep{peluchetti2023diffusion,shi2023diffusion} was considered the continuous time $(N=\infty)$, but recently, it has been extended to the finite amount of time moments \citep{gushchin2024adversarial}, i.e., $N<\infty$. We recall their definitions of the projections for finite $N$. In this case, the procedure is called the \textbf{D-IMF} (discrete-time IMF).

\paragraph{Reciprocal Projection.} Consider a process $q\!\in\! \mathcal{P}(\mathcal{X}^{N+2})$. Then the reciprocal projection $\text{proj}_{\mathcal{R}^{\text{ref}}}(q)$ with respect to the reference process $q^{\text{ref}}$ is a process given by:
\begin{equation}
    \label{eq:recip_proj}
    \left[proj_{\mathcal{R}^{\text{ref}}}(q)\right](x_0, x_{\text{in}}, x_1) = q^{\text{ref}}(x_{\text{in}}| x_0, x_1)q(x_0, x_1)
    \nonumber.
\end{equation}

\paragraph{Markovian Projection.} Consider ${q\!\in \!\mathcal{P}(\mathcal{X}^{N+2})}$. Then the Markovian projection $\text{proj}_{\mathcal{M}}(q)$ is given by:
\begin{multline}
    \left[proj_{\mathcal{M}}(q)\right](x_0, x_{\text{in}}, x_1) = \\ = \underbrace{q(x_0)\prod_{n=1}^{N+1}q(x_{t_{n}}|x_{t_{n-1}})}_{\text{forward representation}} = \\ = \underbrace{q(x_1)\prod_{n=1}^{N+1}q(x_{t_{n-1}}|x_{t_{n}})}_{\text{backward representation}}
    \label{eq:markov_proj}
\end{multline}

The reciprocal projection obviously preserves the joint distribution $q(x_0,x_1)$ of a process at time moments $t=0,1$. The Markovian projection, in general, alters $q(x_0,x_1)$ but preserves the joint distributions $\{q(x_{t_n},x_{t_{n-1}})\}_{n=1}^{N+1}$ at neighboring time moments and the marginal distributions $q(x_{t_{n}})$.

\textbf{The D-IMF procedure} is initialized with any process $q^0 \in \Pi_{N}(p_0, p_1)$. Then the procedure alternates between reciprocal $proj_{\mathcal{R}^{\text{ref}}}$ and Markovian $proj_{\mathcal{M}}$ projections:
\begin{equation}
    \label{eq:d_imf}
    \begin{gathered}
        q^{2l+1} = proj_{\mathcal{R}^{\text{ref}}}\left(q^{2l}\right),
        \\
        q^{2l+2} = proj_{\mathcal{M}}\left(q^{2l+1}\right).
    \end{gathered}
\end{equation}
Since both the Markovian and reciprocal projections preserve marginals $p_0,p_1$ at times $t=0,1$, respectively, we have that each $q^{l}\in\Pi_{N}(p_0,p_1)$. In certain configurations of $N$, $\mathcal{X}$, $q^{\text{ref}}$, IMF provably converges to the dynamic SB $q^{*}$ in KL, i.e., $\lim_{l\rightarrow\infty}\KL{q^{l}}{q^{*}}=0$. Specifically, the convergence easily follows from the generic proof argument in \citep[Theorem 8]{shi2023diffusion} \textit{as soon as it is known that $q^{*}$ is the unique process in $\Pi_{N}(p_0,p_1)$ that is both Markovian and reciprocal}. We provide Table \ref{tab:sb_setups}, summarizing the configurations for which this \textbf{characterization} of SB is known. We also list the related practical algorithms.

Finally, we would like to emphasize that the \textit{convergence rate of the (D-)IMF procedure notably depends on the number $N$ of time steps}. In fact, for each $N$ it is its own separate procedure with a different Markovian projection \eqref{eq:markov_proj}, see \citep[Figure 6a]{gushchin2024adversarial}.

\subsection{Object of Study}
\label{sec:object-of-study}
As it is clear from Table \ref{tab:sb_setups}, for the setup with the discrete space $\mathcal{X}=\mathbb{S}^{D}$ and finite amount of time moments $N<\infty$, there is still no theoretical guarantee that the SB is the unique Markovian and reciprocal process. This leaves a large gap in D-IMF usage in this case, and we close it in our paper. 

At the same time, we note that there is a very recent IMF-based algorithm DDSBM \citep{kim2024discrete} for the discrete state space $\mathcal{X}$ but continuous time ($N=\infty$). However, since working with continuous time is infeasible in practice, the authors discretize the time grid to a large finite $N$. Due to this, the authors apply the D-IMF procedure, although it still lacks any theoretical ground in this case. In contrast, our work shows that \textit{theoretically} even $N=1$ is enough.

\section{Categorical Schrödinger Bridge Matching}
\label{sec-main}
We start by establishing the convergence of the D-IMF framework ($N<\infty$) to the SB under a general Markov reference process (\wasyparagraph\ref{sec:theory}) with \underline{the proofs} in Appendix \ref{apx:proofs}. Then we provide a practical optimization procedure and implementation details of the proposed method (\wasyparagraph\ref{sec:practice}).

\subsection{Theoretical Foundation}
\label{sec:theory}
The result of \citep[Theorem 3.6]{gushchin2024adversarial} characterizes the SB solution in $\mathcal{X}=\mathbb{R}^{D}$ and $N<\infty$ as the unique Markovian and Reciprocal process which allows the usage of D-IMF procedure. However, that proof assumes a specific reference process $q^{\text{ref}}=q^{W}$ induced by the Wiener process $W$ (EOT with the quadratic cost) and thus cannot handle a general Markov $q^{\text{ref}}$ or discrete $\mathcal{X}$.

Below we provide our main theoretical result for the \textit{discrete} space $\mathcal{X}$ and \textit{general} Markov reference process $q^{\text{ref}}$ which characterizes SB and immediately allows the usage of D-IMF ($N<\infty$) procedure to get it.\footnote{In fact, our proof argument can be applied to any $\mathcal{X}$, i.e., not only discrete, thus, the ASBM algorithm \citep{gushchin2024adversarial} for \textit{continuous} $\mathcal{X}=\mathbb{R}^{D}$ can be applied for general Markov $q^{\text{ref}}$.}

\begin{tcolorbox}[colback=gray!20, colframe=gray!20, arc=2mm, boxrule=0pt, width=1\linewidth, boxsep=-1pt]
\begin{theorem}[Characterization of the solution for the dynamic SB problem on a discrete space $\mathcal{X}$ with a Markovian reference $q^{\textup{ref}}$]
    \label{thm:main}
    Let $\mathcal{X}$ be a finite discrete space and let $p_0, p_1 \in \mathcal{P}(\mathcal{X})$ be distributions with full support. Let $q^{\textup{ref}}\in\mathcal{M}(\mathcal{X}^{N+2})$ be a reference Markov process with full support on $\mathcal{X}^{N+2}$. If $q^*\in \mathcal{P}(\mathcal{X}^{N+2})$ satisfies the following conditions:
    \begin{enumerate}
        \item $q^*(x_0) = p_0(x_0)$ and $q^*(x_1) = p_1(x_1)$, i.e., $q^{*}(x_0,x_1)$ is a \textbf{transport plan} from $\Pi(p_0, p_1)$;
        \item $q^{*}\in\mathcal{M}(\mathcal{X}^{N+2})$ and $q^{*}\in\mathcal{R}^{\textup{ref}}(\mathcal{X}^{N+2})$, i.e., $q^*$ is both the \textbf{reciprocal} and \textbf{Markov},
    \end{enumerate}
    then $q^{*}$ is the unique solution of the dynamic SB \eqref{eq:disc_dyn_sb}.
\end{theorem}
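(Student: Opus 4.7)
The plan is to reduce the dynamic SB problem to the static SB via the KL disintegration identity, and then to use the Markov plus reciprocal structure of $q^*$ to force its $(x_0,x_1)$-marginal into the Schrödinger (Sinkhorn) factorized form; classical uniqueness of the static SB then identifies $q^*$ with the (unique) dynamic SB.

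First, I would invoke the disintegration \eqref{eq:disc_disintegration}: for any $q \in \Pi_N(p_0, p_1)$,
$$\text{KL}(q \| q^{\text{ref}}) = \text{KL}(q(x_0, x_1) \| q^{\text{ref}}(x_0, x_1)) + \mathbb{E}_{q(x_0,x_1)} \text{KL}(q(x_{\text{in}}|x_0,x_1) \| q^{\text{ref}}(x_{\text{in}}|x_0,x_1)).$$
So any dynamic SB minimizer must be reciprocal (the second term then vanishes) and its $(x_0, x_1)$-marginal must solve the static SB over $\Pi(p_0, p_1)$. In the finite-discrete full-support regime, the static SB is a strictly convex problem with a unique minimizer $\pi^*$, classically characterized by the Schrödinger factorization $\pi^*(x_0,x_1) = u^*(x_0) v^*(x_1) q^{\text{ref}}(x_0,x_1)$ with positive $u^*, v^*$; moreover, $\pi^*$ is the \emph{only} element of $\Pi(p_0, p_1)$ admitting such a factorization. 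Hence the unique dynamic SB is $q^{\text{SB}}(x_0, x_{\text{in}}, x_1) = \pi^*(x_0, x_1)\, q^{\text{ref}}(x_{\text{in}}|x_0, x_1)$.

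Next, I would use the two hypotheses on $q^*$ together. Equating its Markov representation $q^*(x_0)\prod_n q^*(x_{t_n}|x_{t_{n-1}})$ with its reciprocal representation $q^*(x_0,x_1)\, q^{\text{ref}}(x_{\text{in}}|x_0,x_1)$, and expanding $q^{\text{ref}}(x_{\text{in}}|x_0,x_1)$ via the Markov factorization of $q^{\text{ref}}$, one obtains after dividing (valid by full support)
$$\frac{q^*(x_0, x_1)}{q^{\text{ref}}(x_0, x_1)} = \frac{q^*(x_0)}{q^{\text{ref}}(x_0)} \prod_{n=1}^{N+1} \frac{q^*(x_{t_n}|x_{t_{n-1}})}{q^{\text{ref}}(x_{t_n}|x_{t_{n-1}})},$$
valid for every path. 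The left-hand side depends only on $(x_0, x_1)$, while on the right-hand side, freezing the intermediate states to any reference values $x_{t_n}^{\circ}$ collapses all interior factors to constants; only the first factor (a function of $x_0$ alone) and the last factor (a function of $x_1$ alone) remain variable. This immediately exhibits the factorization $q^*(x_0, x_1) = u(x_0)\, v(x_1)\, q^{\text{ref}}(x_0, x_1)$ for some positive $u, v$.

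Combining this factorization with the transport-plan constraint $q^* \in \Pi(p_0,p_1)$, the Sinkhorn uniqueness invoked in the first step forces $q^*(x_0,x_1) = \pi^*$; by reciprocity $q^* = \pi^*\cdot q^{\text{ref}}(x_{\text{in}}|x_0,x_1) = q^{\text{SB}}$, which is the unique solution of the dynamic SB. The one genuinely new technical point is the extraction of the Schrödinger factorization from the Markov plus reciprocal hypotheses, and the ``freeze the intermediates'' trick disposes of this in a single line. Everything else rests on classical static-SB / Sinkhorn uniqueness over finite state spaces with a full-support reference, which the theorem's full-support hypotheses guarantee.
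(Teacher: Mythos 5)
Your proposal is correct and follows essentially the same route as the paper's proof: both reduce the dynamic problem to the static one via the KL disintegration, derive the Schrödinger product form $q^*(x_0,x_1)=u(x_0)\,v(x_1)\,q^{\textup{ref}}(x_0,x_1)$ by equating the Markov and reciprocal representations and separating variables, and conclude by the classical uniqueness of the static SB plan (the paper cites Léonard, Theorem~2.8). The only (cosmetic) difference is that the paper works with a single intermediate state $x_{t_n}$ and fixes the endpoint $x_1=x^{\dagger}$ to split the resulting additive identity, whereas you factor over the whole path and freeze the intermediate states; both are the same separation-of-variables trick.
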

\end{tcolorbox}

Our theorem immediately yields the following corollary.

\begin{corollary}[Convergence of D-IMF on discrete spaces]
    \label{corol:convergence}
    The sequence $\{q^{l}\}_{l=0}^{\infty}$ produced by the D-IMF procedure on a discrete space $\mathcal{X}$ and for a Markov reference process from the theorem above converges to $q^{*}$ in KL:
    $$\lim_{l\rightarrow\infty} \KL{q^{l}}{q^{*}}=0.$$
\end{corollary}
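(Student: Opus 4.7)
The plan is to combine Theorem~\ref{thm:main} with the standard convergence argument for alternating KL-projection procedures, as in \citep[Theorem~8]{shi2023diffusion}. Theorem~\ref{thm:main} supplies exactly what was missing in the discrete setting: $q^{*}$ is the \emph{unique} element of $\Pi_{N}(p_{0},p_{1})$ that is simultaneously Markov and reciprocal with respect to $q^{\text{ref}}$. With this characterization in hand, the rest of the convergence argument runs on the finite-dimensional simplex $\mathcal{P}(\mathcal{X}^{N+2})$ and needs no continuous-time technicalities.

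Concretely, I would first record the two Pythagorean-type KL decompositions
\begin{align*}
\KL{q}{q^{\text{ref}}} &= \KL{q}{proj_{\mathcal{R}^{\text{ref}}}(q)} + \KL{proj_{\mathcal{R}^{\text{ref}}}(q)}{q^{\text{ref}}}, \\
\KL{q}{q^{\text{ref}}} &= \KL{q}{proj_{\mathcal{M}}(q)} + \KL{proj_{\mathcal{M}}(q)}{q^{\text{ref}}},
\end{align*}
valid for every full-support $q$. The first is immediate from the disintegration \eqref{eq:disc_disintegration}: the reciprocal projection replaces $q(x_{\text{in}}\mid x_{0},x_{1})$ by $q^{\text{ref}}(x_{\text{in}}\mid x_{0},x_{1})$, collapsing the conditional-KL term. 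The second uses the Markov factorization of $q^{\text{ref}}$ to split $\log q^{\text{ref}}$ into one-step log-transitions and regroup the expectations under $q$. Evaluated at $q = q^{l}$, these identities show that $\KL{q^{l}}{q^{\text{ref}}}$ is non-increasing and bounded below by $\KL{q^{*}}{q^{\text{ref}}}$ (since both projections preserve $\Pi_{N}(p_{0},p_{1})$), so $\sum_{l} \KL{q^{l}}{q^{l+1}}$ is summable and in particular $\KL{q^{l}}{q^{l+1}} \to 0$.

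Next, I would invoke compactness: since $\mathcal{X}^{N+2}$ is finite, $\Pi_{N}(p_{0},p_{1})$ is compact. Any subsequence admits a further convergent subsequence $q^{l_{k}} \to q^{\infty} \in \Pi_{N}(p_{0},p_{1})$, and the vanishing of consecutive gaps forces $q^{l_{k}+1}, q^{l_{k}+2} \to q^{\infty}$ as well. Continuity of both projection maps on full-support distributions together with the alternating structure of the iteration then yields $q^{\infty} = proj_{\mathcal{R}^{\text{ref}}}(q^{\infty}) = proj_{\mathcal{M}}(q^{\infty})$, so $q^{\infty}$ is both Markov and reciprocal. Theorem~\ref{thm:main} now pins down $q^{\infty} = q^{*}$. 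Since every subsequential limit equals $q^{*}$, the whole sequence converges to $q^{*}$ in total variation, and continuity of KL at the full-support target upgrades this to $\KL{q^{l}}{q^{*}} \to 0$.

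The main subtlety I expect to need care with is that full support must be preserved along the iterations, so that the KL divergences stay finite and the Markov projection is continuous at the relevant limits. Under the full-support hypotheses of Theorem~\ref{thm:main} on $p_{0}, p_{1}, q^{\text{ref}}$, this is automatic for any full-support initialization such as $q^{0}(x_{0},x_{1}) = p_{0}(x_{0}) p_{1}(x_{1})$: the reciprocal projection inherits full support from $q^{\text{ref}}$, and the Markov projection preserves it. Beyond this verification the argument is routine bookkeeping.
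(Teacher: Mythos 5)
Your proposal is correct and follows essentially the same route as the paper: the paper establishes the corollary by combining Theorem~\ref{thm:main} with the generic alternating-projection argument of \citep[Theorem 8]{shi2023diffusion}, which is precisely the Pythagorean-identity, summable-increments, compactness, and uniqueness reasoning you spell out. The only cosmetic difference is that the cited argument tracks the monotone quantity $\KL{q^{l}}{q^{*}}$ directly (using that $q^{*}$ is itself both Markov and reciprocal), which lets one skip your final continuity-of-KL upgrade; your full-support bookkeeping along the iterates is a legitimate detail the paper leaves implicit.
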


\subsection{Practical Implementation}
\label{sec:practice}
In this subsection, we discuss our computational algorithm to implement D-IMF and get the SB problem solution $q^{*}$.

Since we consider a finite amount $N$ of time steps, the processes $q\in\mathcal{P}(\mathcal{X}^{N+2})$ are discrete-time Markov chains (DTMC). A DTMC is defined by $N+1$ transition matrices $Q_n$ of size $|\mathcal{X}|\times|\mathcal{X}|$, where $[Q_n]_{x_{t_{n-1}}x_{t_n}}$ represents the probability of transitioning from state $x_{t_{n-1}}$ to state $x_{t_n}$:
$$q(x_{t_n} | x_{t_{n-1}}) = [Q_n]_{x_{t_{n-1}}x_{t_n}}.$$
Thus, in theory, one can model any such DTMC $q$ explicitly. However, in practice, the size $|\mathcal{X}|$ may be large. In particular, we consider the case $\mathcal{X}=\mathbb{S}^{D}$, where $\mathbb{S}$ is a categorical space leading to exponential amount $S^{D}$ of elements in $\mathcal{X}$.

This raises two natural questions: \textbf{(a)} how to choose a reference process $q^{\text{ref}}$ and work with it? and \textbf{(b)} how to parameterize and update the process $q$ during D-IMF steps? Both these questions will be answered in the following generic discussion about the parameterization and implementation of reciprocal and Markovian projections.

\subsubsection{Implementing the Reciprocal Projection.} The reciprocal projection is rather straightforward if we can draw samples from our current process $q(x_0, x_1)$ and the reference bridge $q^{\text{ref}}(x_{t_{n-1}}|x_0,x_1)$. Indeed, sampling $(x_0,x_{t_{n-1}},x_1)\!\sim\! proj_{\mathcal{R}^{\text{ref}}}(q)$ is just merging these two.

\subsubsection{Choosing a Reference Process.} As it is clear from the paragraph above, it is reasonable to consider reference processes $q^{\text{ref}}\in\mathcal{M}(\mathcal{X}^{N+2})$ for which sampling from their bridge $q^{\text{ref}}(x_{t_{n-1}}|x_0,x_1)$ is easy.
We give two popular examples of $q^{\text{ref}}$ which appear in related work \citep{austin2021structured} that lead to practically meaningful cost $c$ for EOT \eqref{eq:eot}. For both examples, we start with dimension $D=1$.

\textbf{Case 1 (Uniform Reference $q^{\text{unif}}$).} In this case, we assume that the set of categories $\mathbb{S}$ is unordered, e.g., atom types, text tokens, latent variables, etc. Define a process where the state stays in the current category $x_{t_{n-1}}$ with high probability, while the remaining probability is distributed uniformly among all other categories. This process $q^{\text{unif}}$ is called \emph{uniform} and has transitions matrices $Q_n$:
\begin{equation}
    [Q_{n}]_{x_{t_{n-1}}x_{t_n}}=\begin{cases}
        1 - \alpha, & \text{if } x_{t_n} = x_{t_{n-1}}, \\
        \frac{\alpha}{S-1} , & \text{if } x_{t_n} \neq x_{t_{n-1}},
    \end{cases}
\end{equation}
where $\alpha\in [0,1]$ is the \emph{stochasticity parameter} that controls the probability of transitioning to a different category.

\textbf{Case 2 (Gaussian Reference $q^{\text{gauss}}$).} If we know that the categories are ordered, specifically, $\mathbb{S}=(1,2,\dots, S)$, and two neighboring categories are assumed to be related, the transitions may be chosen to reflect this. Consider the  \emph{Gaussian}-like reference process $q^{\text{gauss}}$ with $[Q_{n}]_{x_{t_{n-1}}x_{t_n}}=$
\begin{equation} 
    \begin{cases} 
        \frac{\exp\left(-\frac{4 (x_{t_n} - x_{t_{n-1}})^2}{(\alpha\Delta)^2}\right)}{\sum_{
        \delta=-\Delta}^{\Delta} \exp\left(-\frac{4 \delta^2}{(\alpha\Delta)^2 }\right)}, & x_{t_n} \neq x_{t_{n-1}}, \\ 
        1 - \sum_{x_{t_n} \neq x_{t_{n-1}}} [Q_n]_{x_{t_{n-1}} x_{t_n}}, & x_{t_n} = x_{t_{n-1}},
    \end{cases} 
\end{equation}
where $\alpha>0$ is an analog of the variance parameter, and $\Delta = S - 1$ is a maximum distance between categories.

\textbf{Dimension $D > 1$.} The construction of $q^{\text{unif}}$  (or $q^{\text{gauss}}$) generalizes to higher $D$ by combining several such independent processes (one per dimension). The bridges $q^{\text{ref}}(x_{\text{in}}|x_0,x_1)$ can be easily derived analytically and sampled thanks to the Markov property and the Bayes' formula. 

For more details on the \underline{construction and selection} of reference processes $q^{\text{ref}}$, please refer to Appendix \ref{apx:reference}.

\subsubsection{Parameterization of the Learnable Process.}
\label{sec:parametrization}
There are $|\mathbb{S}^D| = S^D$ possible states $x = (x^1, \dots, x^D)$ in the space, where $S$ is the number of categories for each variable. Consequently, each transition matrix $Q_n$ is of size ${S^D\times S^D}$, i.e., it grows exponentially in dimension $D$. Due to this, explicit modeling of the transition matrices of the process that we learn is computationally infeasible. We follow the standard practice in discrete generative models \citep{hoogeboom2021argmax, austin2021structured, gat2024discrete, campbell2024generative} and model the transition probability via combining two popular techniques: posterior sampling and factorization over the dimensions. Firstly, we parameterize the transitions $q_{\theta}(x_{t_{n}} | x_{t_{n-1}})$ as follows:
\begin{multline}
    q_{\theta}(x_{t_{n}} | x_{t_{n-1}}) \!=\! \mathbb{E}_{\widetilde{q_{\theta}}(\widetilde{x}_1 | x_{t_{n-1}})}\!\left[q^{\text{ref}}(x_{t_{n}} | x_{t_{n-1}}, \widetilde{x}_1)\right],
    \label{eq:parameterization}
\end{multline}
where $\widetilde{q}_{\theta}(\widetilde{x}_1 | x_{t_{n-1}})$ is a learnable distribution. This parameterization assumes that sampling of $x_{t_n}$ given $x_{t_{n-1}}$ can be done by first sampling some ``endpoint'' $\widetilde{x}_1 \sim \widetilde{q}_{\theta}(\widetilde{x}_1 | x_{t_{n-1}})$, and then sampling from the bridge $q^{\text{ref}}(x_{t_{n}} | x_{t_{n-1}}, \widetilde{x}_1)$. Second, the parameterization for $\widetilde{q}_{\theta}(\widetilde{x}_1 | x_{t_{n-1}})$ is factorized:
$$
    \widetilde{q}_{\theta}(\widetilde{x}_{1} | x_{t_{n-1}})\approx\prod_{d=1}^{D}\widetilde{q}_{\theta}(\widetilde{x}_{1}^{d} | x_{t_{n-1}}).
$$
In this case, for each $x_{t_{n-1}}$, we just need to predict a row-stochastic $D\times S$ matrix of probabilities $\widetilde{q}_{\theta}(\widetilde{x}_{1}^{d} | x_{t_{n-1}})$. See Appendix \ref{apx:limitations} for \underline{a discussion of the limitations} of this approach. Following the common practices, we employ a neural network $S^{D}\rightarrow D\times S$ which outputs a row-stochastic matrix for each input $x_{t_{n-1}}$. Typically, predicting endpoints at each time step $n-1$ would require $N+1$ distinct models for each $\widetilde{q}_{\theta}(\widetilde{x}_{1} | x_{t_{n-1}})$. Instead, we use a single neural network with an additional input indicating the timestep.

\subsubsection{Implementing the Markovian Projection.} The Markovian projection is a little bit more complex than the reciprocal one and requires learning a process. From \wasyparagraph\ref{sec:imf}, the goal of the projection is to find a Markov process whose transition probabilities match those of the given reciprocal process $q$. Fortunately, we show that this can be achieved by minimizing an objective that closely resembles the optimization of the variational bound used in diffusion models \citep{ho2020denoising, austin2021structured, hoogeboom2021argmax}. 

\begin{proposition}  
    \label{prop:markov-proj}
    Let $q\in\mathcal{R}^{\textup{ref}}(\mathcal{X}^{N+2})$ be a given reciprocal process. Then, the Markovian projection $proj_{\mathcal{M}}(q) \in \mathcal{M}(\mathcal{X}^{N+2})$ can be obtained by minimizing:  
    \vspace{-2mm}
    \begin{multline}  
        \label{eq:decomp}
        L(m) \stackrel{\textup{def}}{=} \mathbb{E}_{q(x_0,x_1)}\Bigg[\sum_{n=1}^{N}\mathbb{E}_{q^{\textup{ref}}(x_{t_{n-1}} | x_0, x_1)} \\   
         \textup{KL}\left(q^{\textup{ref}}(x_{t_{n}}|x_{t_{n-1}},x_1) || m(x_{t_{n}} | x_{t_{n-1}})\right) - \\ -\mathbb{E}_{q^{\textup{ref}}(x_{t_N} | x_0, x_1)}\left[\log m(x_1 | x_{t_N})\right]\Bigg], 
    \end{multline}  
    among the Markov processes $m\in \mathcal{M}(\mathcal{X}^{N+2})$. Furthermore, this objective is also equivalent to optimizing $\sum_{n=1}^{N+1}\mathbb{E}_{q(x_{t_{n-1}})}\KL{q(x_{t_n} | x_{t_{n-1}})}{m(x_{t_n} | x_{t_{n-1}})}$.
\end{proposition}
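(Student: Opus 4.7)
The plan is to reduce both displayed objectives, up to constants in $m$, to the same functional of $m$, which is minimized by the transitions of $\text{proj}_{\mathcal{M}}(q)$. Since $\text{proj}_{\mathcal{M}}(q)$ is by definition the minimizer of $\KL{q}{m}$ over $m\in\mathcal{M}(\mathcal{X}^{N+2})$, I would first factorize a generic Markov $m$ via its forward representation \eqref{eq:markov_proj}, giving
\begin{align*}
\KL{q}{m} = -H(q) - \mathbb{E}_{q}[\log m(x_0)] - \sum_{n=1}^{N+1}\mathbb{E}_{q(x_{t_{n-1}},x_{t_n})}[\log m(x_{t_n}|x_{t_{n-1}})].
\end{align*}
Adding and subtracting the conditional entropies $\mathbb{E}_{q(x_{t_{n-1}},x_{t_n})}[\log q(x_{t_n}|x_{t_{n-1}})]$ rewrites this sum as $\sum_{n=1}^{N+1}\mathbb{E}_{q(x_{t_{n-1}})}\KL{q(x_{t_n}|x_{t_{n-1}})}{m(x_{t_n}|x_{t_{n-1}})}$ plus a constant in $m$. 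Non-negativity of each conditional KL pins down $m^{*}(x_{t_n}|x_{t_{n-1}})=q(x_{t_n}|x_{t_{n-1}})$ as the uniquely minimizing transition kernel, matching the forward representation of $\text{proj}_{\mathcal{M}}(q)$; this establishes the ``furthermore'' claim.

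For the primary objective $L(m)$, I would drop the $m$-independent pieces $\log q^{\text{ref}}(x_{t_n}|x_{t_{n-1}},x_1)$ inside each KL summand, so that, up to constants,
\begin{align*}
L(m) \equiv & -\sum_{n=1}^{N}\mathbb{E}_{q(x_0,x_1)\,q^{\text{ref}}(x_{t_{n-1}}|x_0,x_1)\,q^{\text{ref}}(x_{t_n}|x_{t_{n-1}},x_1)}[\log m(x_{t_n}|x_{t_{n-1}})] \\
& - \mathbb{E}_{q(x_0,x_1)\,q^{\text{ref}}(x_{t_N}|x_0,x_1)}[\log m(x_1|x_{t_N})].
\end{align*}
For each $n\le N$, the Markov property of $q^{\text{ref}}$ gives $q^{\text{ref}}(x_{t_n}|x_{t_{n-1}},x_1)=q^{\text{ref}}(x_{t_n}|x_{t_{n-1}},x_0,x_1)$, so $q^{\text{ref}}(x_{t_{n-1}}|x_0,x_1)\,q^{\text{ref}}(x_{t_n}|x_{t_{n-1}},x_1)=q^{\text{ref}}(x_{t_{n-1}},x_{t_n}|x_0,x_1)$. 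The reciprocal property $q(x_{\text{in}}|x_0,x_1)=q^{\text{ref}}(x_{\text{in}}|x_0,x_1)$, applied to the full intermediate vector and then marginalized, upgrades this to $q(x_{t_{n-1}},x_{t_n}|x_0,x_1)$. Integrating against $q(x_0,x_1)$ and marginalizing the endpoints collapses the $n$-th summand to $-\mathbb{E}_{q(x_{t_{n-1}},x_{t_n})}[\log m(x_{t_n}|x_{t_{n-1}})]$. An identical argument with $x_{t_{N+1}}=x_1$ playing the role of the ``next'' state handles the boundary term and supplies the missing $n=N+1$ summand. Hence $L(m)$ agrees with $\sum_{n=1}^{N+1}\mathbb{E}_{q(x_{t_{n-1}})}\KL{q(x_{t_n}|x_{t_{n-1}})}{m(x_{t_n}|x_{t_{n-1}})}$ up to a constant in $m$, proving equivalence.

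The main obstacle is the chained use of the two structural hypotheses: neither alone suffices. The Markov property of $q^{\text{ref}}$ is needed to drop $x_0$ from $q^{\text{ref}}(x_{t_n}|x_{t_{n-1}},x_0,x_1)$, and the reciprocality of $q$ is needed to swap the resulting $q^{\text{ref}}$-conditional on $(x_0,x_1)$ for the corresponding $q$-conditional before marginalizing the endpoints out. A minor subtlety worth noting is that $L(m)$ contains no $\log m(x_0)$ term, so as a functional on $\mathcal{M}(\mathcal{X}^{N+2})$ it determines only the transition kernels of $\text{proj}_{\mathcal{M}}(q)$; the initial marginal $m(x_0)=q(x_0)$ is fixed automatically by the forward factorization.
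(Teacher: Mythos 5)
Your proposal is correct and follows essentially the same route as the paper: both start from $\text{proj}_{\mathcal{M}}(q)=\argmin_{m\in\mathcal{M}}\KL{q}{m}$, use the Markov factorization of $m$ together with the reciprocal property of $q$ and the Markov property of $q^{\text{ref}}$ to reduce everything to the per-step transition cross-entropies, and dismiss the $m(x_0)$ term in the same way. The only difference is organizational — you route both objectives through a common cross-entropy functional and make the marginal-matching identity $q(x_0,x_1)\,q^{\text{ref}}(x_{t_{n-1}}|x_0,x_1)\,q^{\text{ref}}(x_{t_n}|x_{t_{n-1}},x_1)\mapsto q(x_{t_{n-1}},x_{t_n})$ explicit, which the paper leaves implicit when passing between its equations.
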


\begin{algorithm}[t]
   \caption{Categorical SB matching (CSBM)}
   \label{alg:csbm}
    \begin{algorithmic}
        {\REQUIRE number of intermediate time steps $N$; \\
        \hspace{7mm} number of outer iterations $L \in \mathbb{N}$; \\
        \hspace{7mm} initial coupling $q^0(x_{0}, x_1)$; \\
        \hspace{7mm} reference process $q^{\text{ref}}$.
        }
        {\ENSURE forward model $q_{\theta}(x_{t_{n}}|x_{t_{n-1}})$; \\
        \qquad \quad backward model $q_{\eta}(x_{t_{n-1}}|x_{t_{n}})$.}
        \FOR{$l = 1$ {\bfseries to} $L$}
            \STATE {\bfseries Forward step} (repeat until convergence){\bfseries:} 
            \STATE \quad Sample $n \sim U[1, N+1]$;
            \STATE \quad Sample $(x_0, x_1) \sim p_1(x_1)\prod_{n=1}^{N+1}q_\eta(x_{t_{n-1}} | x_{t_n})$;
            \STATE \quad Sample $x_{t_{n-1}} \sim q^{\text{ref}}(x_{t_{n-1}} | x_0, x_1)$;
            \STATE \quad Train $q_\theta$ by minimizing $L_\theta$ \eqref{eq:forward-loss};
            \STATE {\bfseries Backward step} (repeat until convergence){\bfseries:}
            \STATE \quad Sample $n \sim U[1, N+1]$;
            \STATE \quad Sample $(x_0, x_1) \sim p_0(x_0)\prod_{n=1}^{N+1}q_\theta(x_{t_n} | x_{t_{n-1}})$;
            \STATE \quad Sample $x_{t_{n}} \sim q^{\text{ref}}(x_{t_{n}} | x_0, x_1)$;
            \STATE \quad Train $q_\eta$ by minimizing $L_\eta$ \eqref{eq:backward-loss};
        \ENDFOR
    \end{algorithmic}
\end{algorithm}

Note that the key distinction from standard losses in diffusion models, such as \citep[Equation 1]{austin2021structured}, lies in the sampling of $x_{t_{n-1}}$. Instead of drawing from the noising process $q^{\text{ref}}(x_{t_{n-1}} | x_1)$, it is sampled from the reference bridge distribution $q^{\text{ref}}(x_{t_{n-1}} | x_0, x_1)$. As a result, with the proposed parametrization and Markovian projection representation, we can effectively apply the learning methodology from D3PM \citep{austin2021structured}. The explicit \underline{loss formulation} is provided in Appendix \ref{apx:explicit_loss}.

\subsubsection{Practical Implementation of the D-IMF Procedure.} With the reciprocal and Markovian projections fully established, we now proceed to the implementation of the D-IMF procedure. This method is conventionally applied in a bidirectional manner \citep{shi2023diffusion, gushchin2024adversarial}, incorporating both forward and backward representations \eqref{eq:markov_proj}. This is because training in a unidirectional manner has been shown to introduce an error in IMF \citep[Appendix I]{de2024schr}. Therefore, we follow a bidirectional approach, which naturally leads to the \textbf{Categorical Schrödinger Bridge Matching (CSBM)} Algorithm \ref{alg:csbm}.

\section{Experimental Illustrations}
We evaluate our CSBM algorithm across several setups. First, we analyze the convergence of D-IMF on discrete data (\wasyparagraph\ref{sec:convergence}). Then, we demonstrate how CSBM performs with different reference processes in 2D experiments (\wasyparagraph\ref{sec:toy-exp}). Next, we test CSBM’s ability to translate images using the colored MNIST dataset (\wasyparagraph\ref{sec:cmnist-exp}), varying the number of steps $N$. We then present an experiment on the CelebA dataset (\wasyparagraph\ref{sec:celeba-exp}), showcasing CSBM's performance in a latent space. Finally, we explore \underline{the text domain} by solving sentiment transfer on the Amazon Reviews dataset (Appendix \ref{apx:amazon-exp}). \underline{Experimental details} are provided in Appendix \ref{apx:aspects} and additional immages in Appendix \ref{apx:additional-images}.

\subsection{Convergence of D-IMF on Discrete Spaces}
\label{sec:convergence}
In this section, we derive analytical expressions for D-IMF and compare its convergence on discrete data under several setups. As noted in \wasyparagraph\ref{sec:imf}, the Markovian projection preserves the one-step transition probabilities of the given process $q^{2l+1}$. Thus, our task reduces to replicating:
\begin{equation*}
    q^{2l + 2}\bigl(x_{t_n}| x_{t_{n-1}}\bigr) = q^{2l + 1}\bigl(x_{t_n}| x_{t_{n-1}}\bigr), \quad \forall n\in[1, N+1].
\end{equation*}
For each D-IMF iteration, these transition matrices can be extracted from the joint distribution:
\begin{multline*}
    q^{2l+1}(x_{t_n},x_{t_{n-1}})= \sum_{x_0,x_1\in\mathcal X}
        \Bigl[q^{2l+1}(x_0,x_1) \:\cdot \\ 
        \cdot q^{\text{ref}}\bigl(x_{t_n}| x_0,x_1\bigr)
        q^{\text{ref}}\bigl(x_{t_{n-1}}| x_{t_n},x_1\bigr)\Bigr],
\end{multline*}
where $q^{\text{ref}}\bigl(x_{t_n}| x_0,x_1\bigr)$ and $q^{\text{ref}}\bigl(x_{t_{n-1}}| x_{t_n},x_1\bigr)$ could be derived using Markov property and Bayes' formula.

Given $q^{2l+1}(x_{t_n}, x_{t_{n-1}})$, we obtain the desired transition distribution $q^{2l+2}(x_{t_n} | x_{t_{n-1}}) = [Q^{2l+2}_n]_{x_{t_{n-1}}, x_{t_n}}$ by normalizing the joint distribution over the marginal $q^{2l+1}(x_{t_{n-1}})$, which is computed by summing over all $x_{t_n} \in \mathbb{S}^D$ in $q^{2l+1}(x_{t_n}, x_{t_{n-1}})$. We then get the conditional distribution $q^{2l+2}(x_1 | x_0)$ by multiplying the transition matrices $Q^{2l+2}_n$, i.e., $q^{2l+2}(x_1 | x_0) = \left[\prod_{n=1}^{N+1}Q^{2l+2}_n\right]_{x_0, x_1}$.

Finally, we reweight this conditional distribution with $p_0(x_0)$ to obtain a new coupling $q^{2l+2}(x_0, x_1) = p_0(x_0) \left[\prod_{n=1}^{N+1} Q^{2l+2}_n\right]_{x_0, x_1}$ of the next iteration.

All of these equations are tractable and can be efficiently computed for small values of $S$ and $D$. Therefore, in our experiment, we solve the SB problem with $S = 50$ and $D = 1$ between the following marginals:
\begin{equation*}
    p_{0}(x_{0})=\frac{1}{S}, \quad p_{1}(x_{1})=\frac{x_1}{\sum_{s=1}^{S} s}.
\end{equation*}

To assess convergence as in Corollary \ref{corol:convergence}, we also required to have the ground-truth bridge $q^{*}$, which we compute via the Sinkhorn algorithm \citep{cuturi2013sinkhorn}. As a cost matrix, we use the negative logarithm of a cumulative transition matrix $\prod_{n=1}^{N+1} Q_n$. The resulting convergence curves, shown in Figure \ref{fig:convergence}, indicate notably fast convergence of $\KL{q^l}{q^*}$.

\begin{figure}[t]
    \centering
    \begin{subfigure}[b]{0.9\linewidth}
        \centering
        \includegraphics[width=0.995\linewidth]{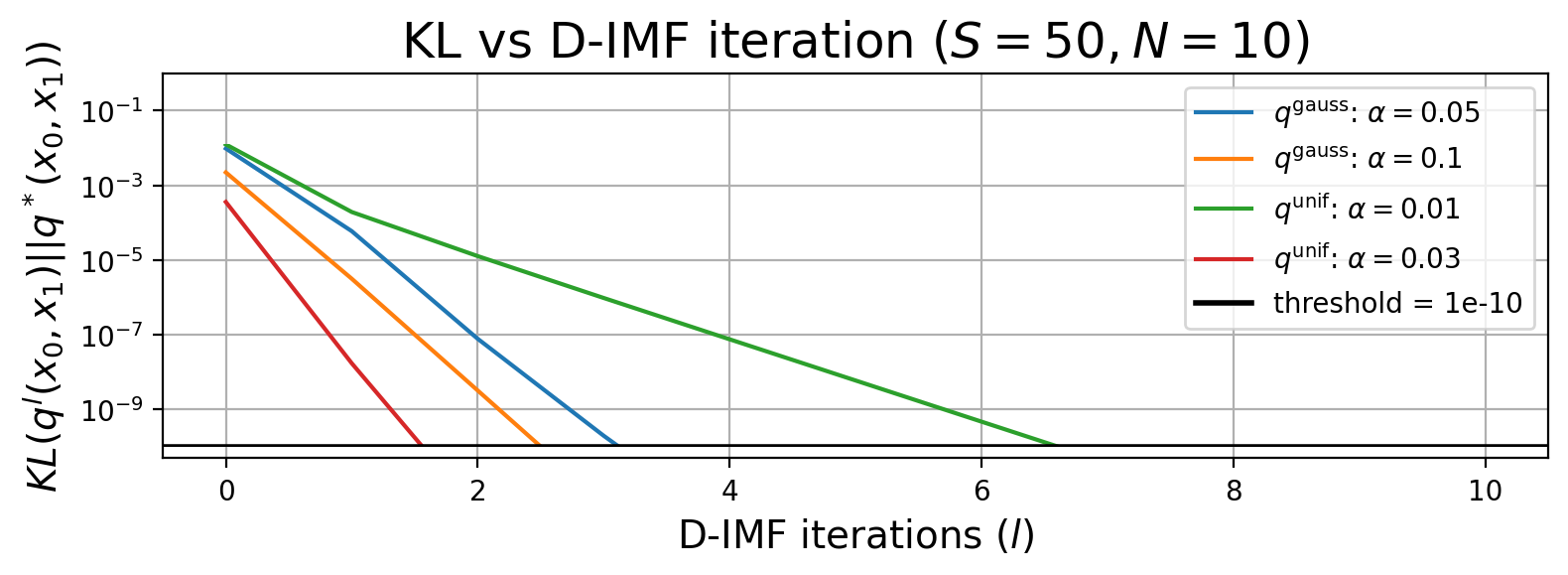}
        \caption{\centering Dependence on the stochastisity parameter $\alpha$.}
    \end{subfigure}
    \begin{subfigure}[b]{0.9\linewidth}
        \centering
        \includegraphics[width=0.995\linewidth]{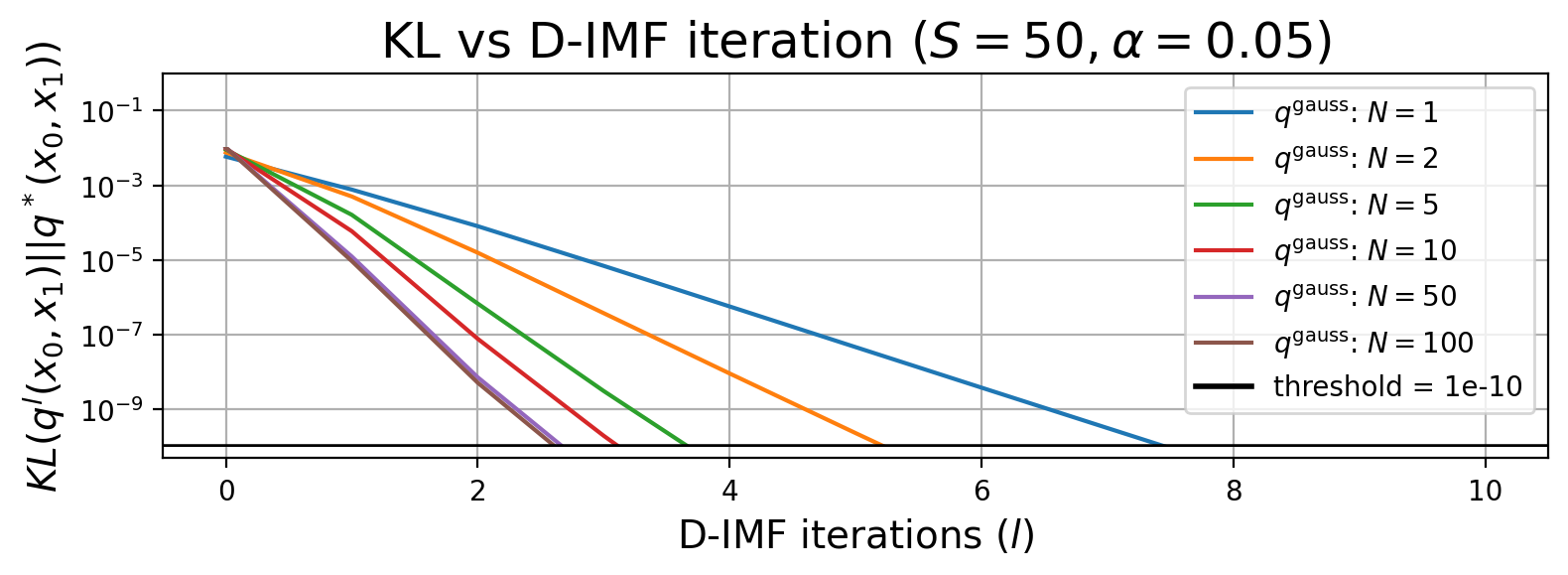}
        \caption{\centering Dependence on the number of time steps $N$ with $q^{\text{gauss}}$.}
    \end{subfigure}
    \begin{subfigure}[b]{0.9\linewidth}
        \centering
        \includegraphics[width=0.995\linewidth]{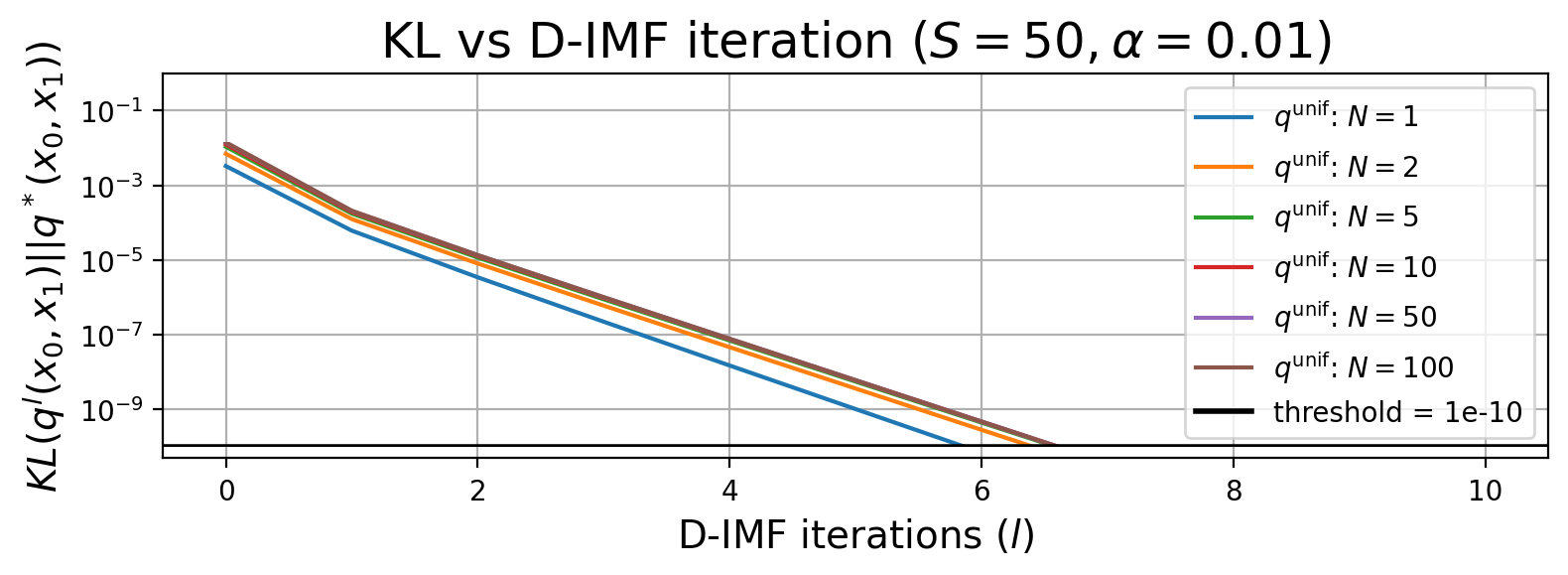}
        \caption{\centering Dependence on the number of time steps $N$ with $q^{\text{unif}}$.}
    \end{subfigure}
    \caption{Dependence of convergence of D-IMF procedure on discrete data under different $N$, $\alpha$ and $q^{\text{ref}}$.}
    \label{fig:convergence}
    \vspace{-4mm}
\end{figure}

\subsection{Illustrative 2D Experiments}
\label{sec:toy-exp}

\begin{figure}[t]
    \centering
    \begin{subfigure}[b]{0.46\linewidth}
        \centering
        \includegraphics[width=0.995\linewidth]{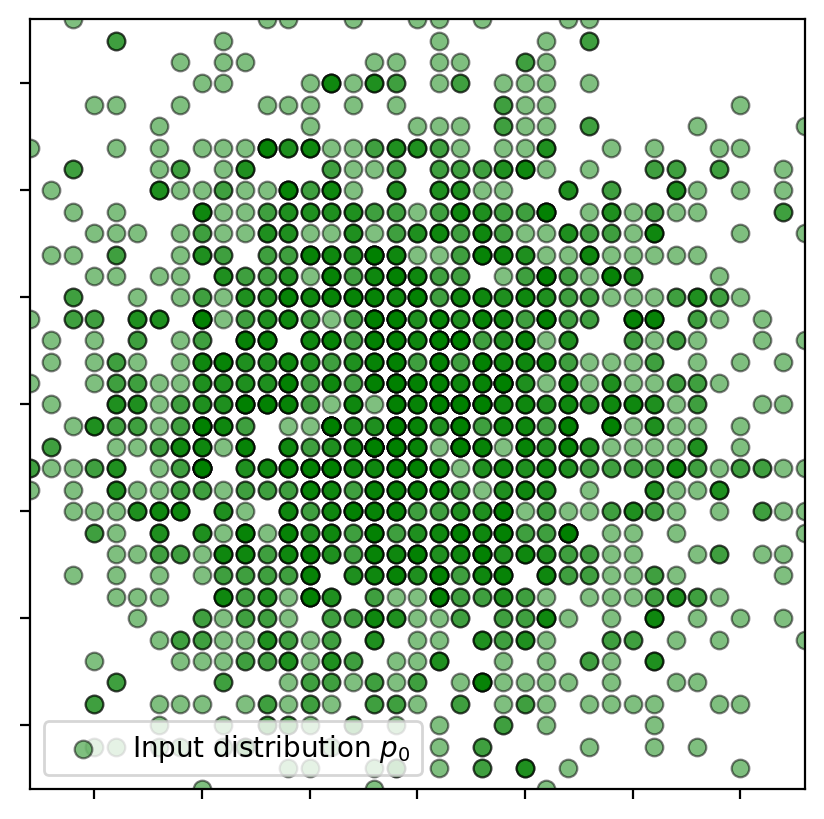}
        \caption{\centering ${x\sim p_0}$}
    \end{subfigure}
    \begin{subfigure}[b]{0.46\linewidth}
        \centering
        \includegraphics[width=0.995\linewidth]{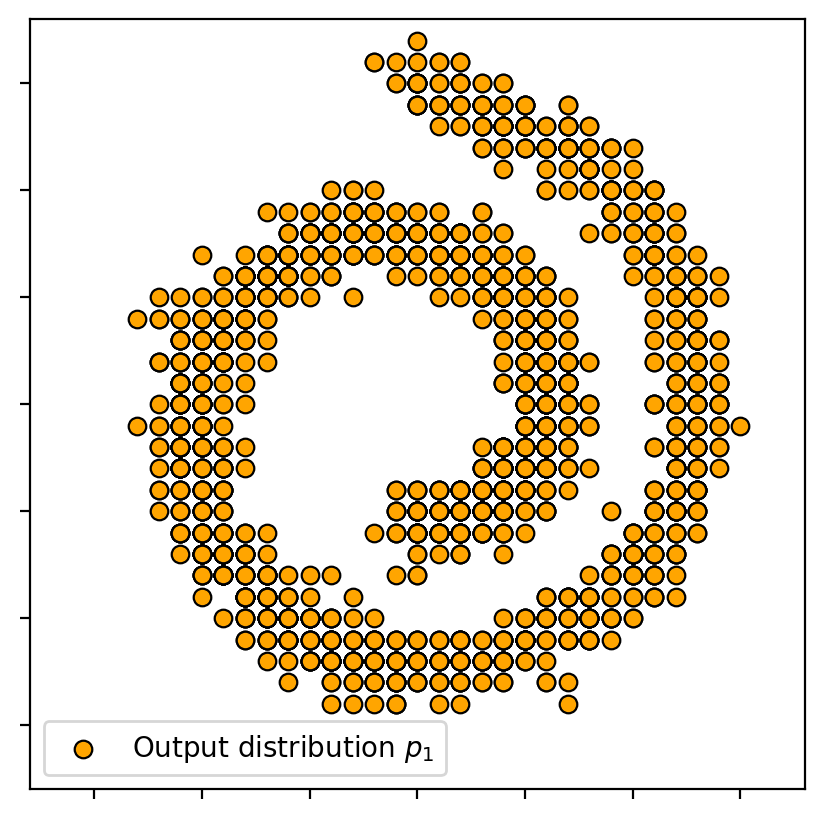}
        \caption{\centering ${x\sim p_1}$}
    \end{subfigure}
    \begin{subfigure}[b]{0.46\linewidth}
        \centering
        \includegraphics[width=0.995\linewidth]{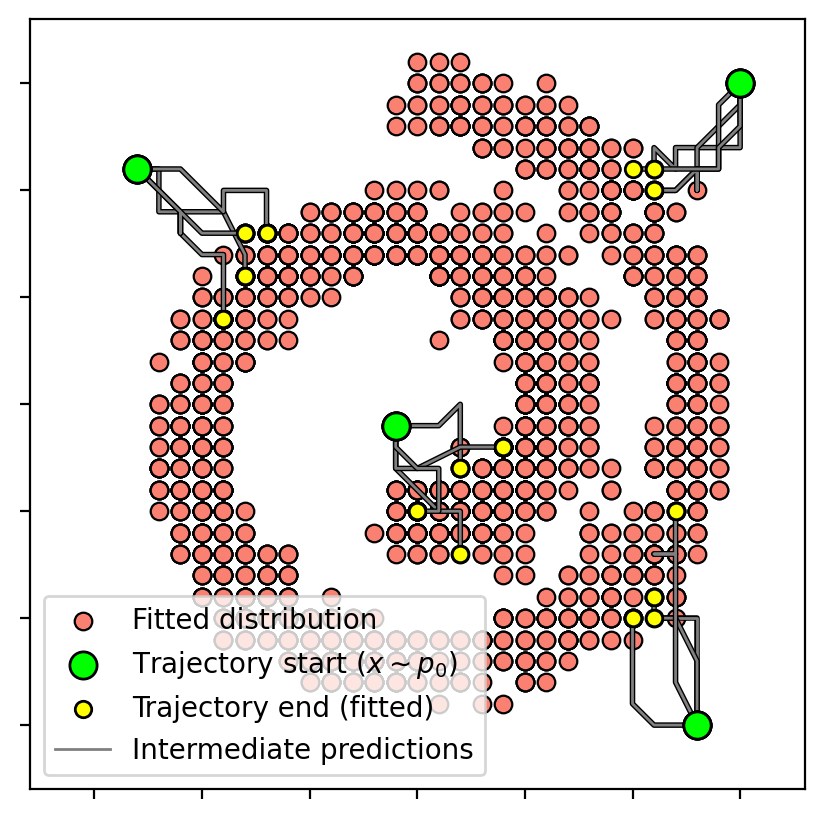}
        \caption{\centering Low stochasticity, $q^{\text{gauss}}$\newline$\alpha=0.02$.}
        \label{fig:toy-gaus-low}
    \end{subfigure}
    \begin{subfigure}[b]{0.46\linewidth}
        \centering
        \includegraphics[width=0.995\linewidth]{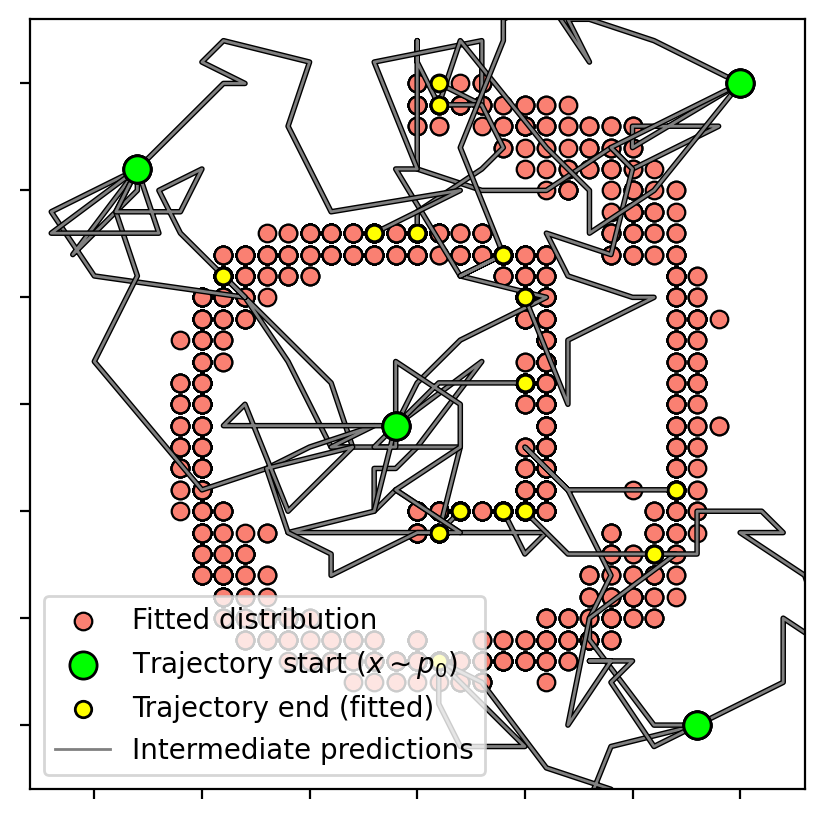}
        \caption{\centering High stochasticity, $q^{\text{gauss}}$\newline$\alpha=0.05$.}
        \label{fig:toy-gaus-high}
    \end{subfigure}
    \begin{subfigure}[b]{0.46\linewidth}
        \centering
        \includegraphics[width=0.995\linewidth]{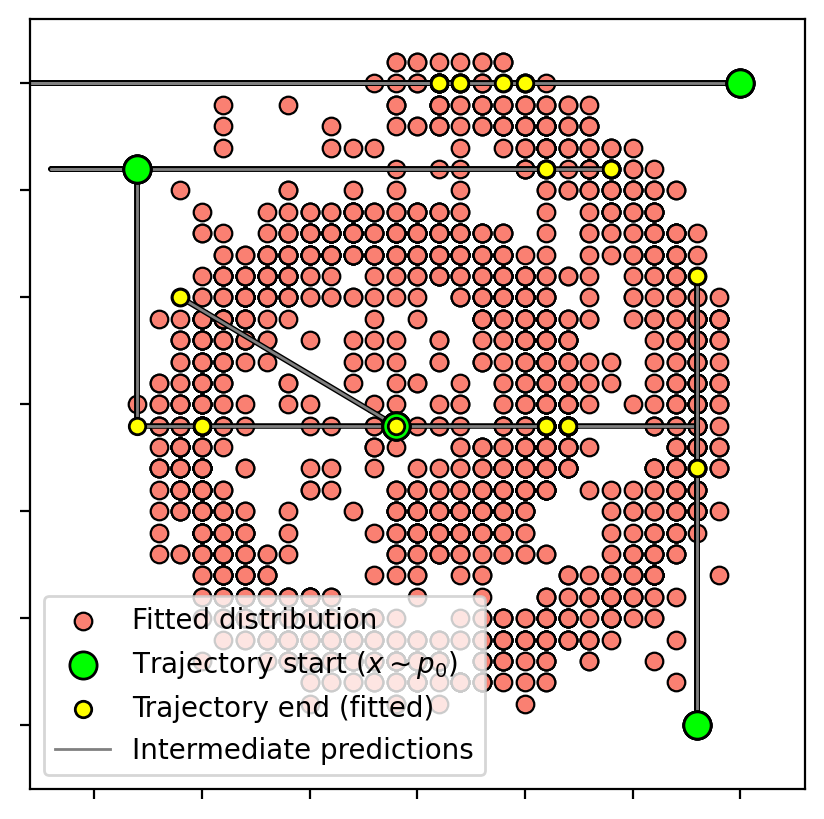}
        \caption{\centering Low stochasticity, $q^{\text{unif}}$\newline$\alpha=0.005$.}
        \label{fig:toy-unif-low}
    \end{subfigure}
    \begin{subfigure}[b]{0.46\linewidth}
        \centering
        \includegraphics[width=0.995\linewidth]{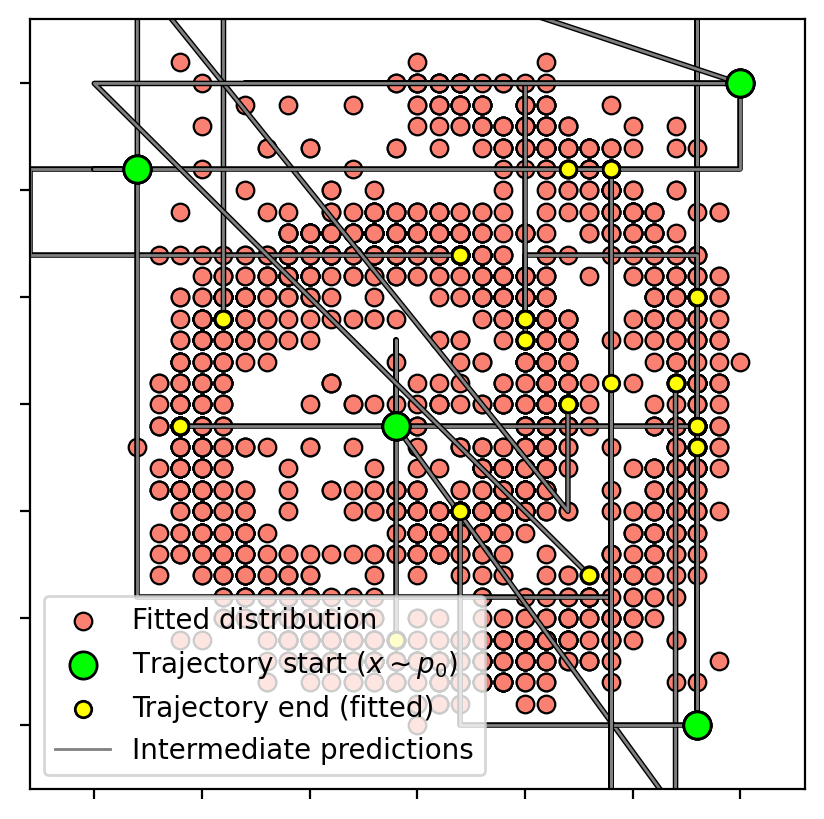}
        \caption{\centering High stochasticity, $q^{\text{unif}}$\newline$\alpha=0.01$.}
        \label{fig:toy-unif-high}
    \end{subfigure}
    \caption{SB between 2D \textit{Gaussian} and \textit{Swiss-Roll} distributions learned by our CSBM algorithm with different reference processes $q^{\text{unif}}$ and $q^{\text{gauss}}$ with varying parameters $\alpha$.}
    \label{fig:toy-images}
\end{figure}

In this experiment, we take the initial distribution $p_0$ as a 2D Gaussian and the target distribution $p_1$ as a Swiss Roll. Both are discretized into $S = 50$ categories, resulting in a 2-dimensional categorical space with $|\mathcal{X}| = S^2 = 50 \times 50$ points. Compared to the previous experiment, this setup involves working with $N$ matrices of size $2\,500 \times 2\,500$, making it a significantly more demanding computational task. Therefore, from now on, we solve the SB problem using our proposed Algorithm \ref{alg:csbm}. The goal of this experiment is to examine the impact of the reference processes $q^{\text{gauss}}$ and $q^{\text{unif}}$. Thus, we train CSBM with $N = 10$ intermediate steps with different $\alpha$ and $q^{\text{ref}}$. For $q^{\text{gauss}}$, we test $\alpha \in \{0.02, 0.05\}$. In the case of $q^{\text{unif}}$ we use $\alpha \in \{0.01, 0.005\}$.

Figure \ref{fig:toy-images} demonstrates that increasing the parameter $\alpha$ increases the number of jumps. In the case of $q^{\text{gauss}}$, the jumps mostly happen only to neighboring categories (Figures \ref{fig:toy-gaus-low} and \ref{fig:toy-gaus-high}). In the case of $q^{\text{unif}}$, the jumps happen to all categories (Figures \ref{fig:toy-unif-low} and \ref{fig:toy-unif-high}). This is aligned with the construction of the reference processes.

\paragraph{Remark.} Beyond the theoretical objectives established in Proposition \ref{prop:markov-proj}, one can match the distributions using alternative loss functions, such as MSE, or through adversarial methods, as in ASBM \citep{gushchin2024adversarial}. For completeness, we conducted additional experiments using the MSE loss and observed results comparable to those obtained with KL. Details on the \underline{experimental setup} and \underline{loss generalization} are provided in Appendix \ref{apx:alt-losses}.

\subsection{Unpaired Translation on Colored MNIST}
\label{sec:cmnist-exp}
Here, we work with the MNIST dataset with randomly colored digits. Inspired by \citep[Appendix C.3]{gushchin2024adversarial}, we consider an unpaired translation problem between classes ``2'' and ``3'' of digits. In our case, we work in the discrete space of images, but not in a continuous space.

Specifically, each pixel is represented using three 8-bit channels (RGB), i.e., $S=256$, and the data space is of size $256^{D}$, where $D=32\times 32\times 3$. The goal of this experiment is to evaluate the capability of CSBM to perform unpaired translation with different numbers of intermediate steps $N$. Since each color channel values have an inherent order, we utilize the Gaussian reference process $q^{\text{gauss}}$ with $\alpha = 0.01$. 

\begin{figure}[t]
    \centering
    \begin{subfigure}[b]{0.08\textwidth}
        \centering
        \includegraphics[width=0.58\linewidth]{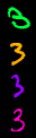}
        \caption{$x \!\sim\! p_0$}
    \end{subfigure}
    \begin{subfigure}[b]{0.18\textwidth}
        \centering
        \includegraphics[width=0.995\linewidth]{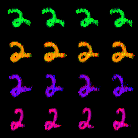}
        \caption{$N=2$}
    \end{subfigure}
    \begin{subfigure}[b]{0.18\textwidth}
        \centering
        \includegraphics[width=0.995\linewidth]{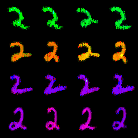}
        \caption{$N=4$}
    \end{subfigure}
    \\
    \vspace{1mm}
    \begin{subfigure}[b]{0.08\textwidth}
        \centering
        \includegraphics[width=0.58\linewidth]{cmnist/cmnist_pics_orig_vert.png}
        \caption{$x\!\sim\! p_0$}
    \end{subfigure}
    \begin{subfigure}[b]{0.18\textwidth}
        \centering
        \includegraphics[width=0.995\linewidth]{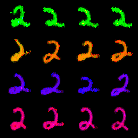}
        \caption{$N=10$}
    \end{subfigure}
    \begin{subfigure}[b]{0.18\textwidth}
        \centering
        \includegraphics[width=0.995\linewidth]{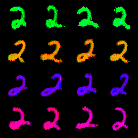}
        \caption{$N=25$}
    \end{subfigure}
    \\ 
    \vspace{1mm}
    \begin{subfigure}[b]{0.08\textwidth}
        \centering
        \includegraphics[width=0.58\linewidth]{cmnist/cmnist_pics_orig_vert.png}
        \caption{$x \!\sim\! p_0$}
    \end{subfigure}
    \begin{subfigure}[b]{0.18\textwidth}
        \centering
        \includegraphics[width=0.995\linewidth]{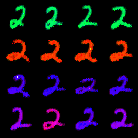}
        \caption{$N=50$}
    \end{subfigure}
    \begin{subfigure}[b]{0.18\textwidth}
        \centering
        \includegraphics[width=0.995\linewidth]{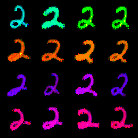}
        \caption{$N=100$}
    \end{subfigure}
    \caption{\centering Results of colored digits unpaired translation ``3'' $\rightarrow$ ``2'' learned by our CSBM algorithm with reference process $q^{\text{gauss}}$ and varying number of time moments $N$.}
    \label{fig:cmnist-images}
\end{figure}

\begin{figure*}[t]
    \centering
    \begin{minipage}{0.01\textwidth}
        \centering
        \vspace{-60mm}
        \rotatebox{90}{\textbf{Low stochasticity}}
    \end{minipage}
    \begin{subfigure}[b]{0.11\textwidth}
        \centering
        \includegraphics[width=0.652\linewidth]{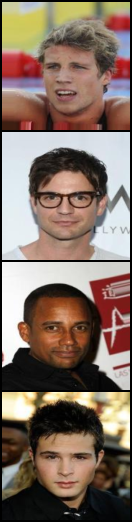}
        \caption{\centering ${x\sim p_0}$}
    \end{subfigure}
    \begin{subfigure}[b]{0.285\textwidth}
        \centering
        \includegraphics[width=0.995\linewidth]{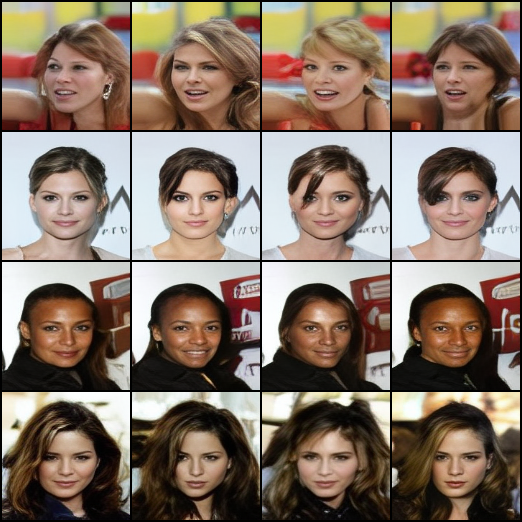}
        \caption{\centering CSBM (\textbf{ours})}
    \end{subfigure}
    \begin{subfigure}[b]{0.285\textwidth}
        \centering
        \includegraphics[width=0.995\linewidth]{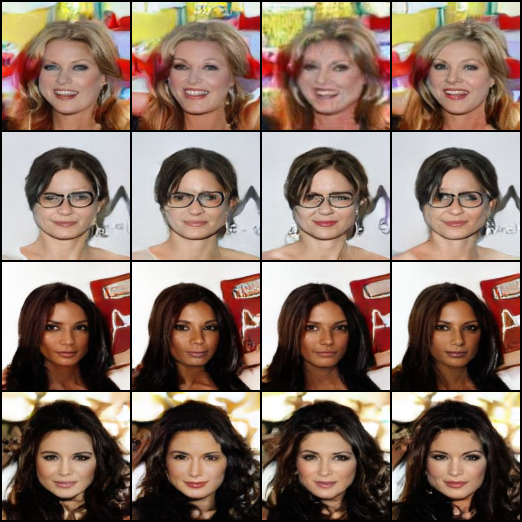}
        \caption{\centering ASBM \citep{gushchin2024adversarial}}
    \end{subfigure}
    \begin{subfigure}[b]{0.285\textwidth}
        \centering
        \includegraphics[width=0.995\linewidth]{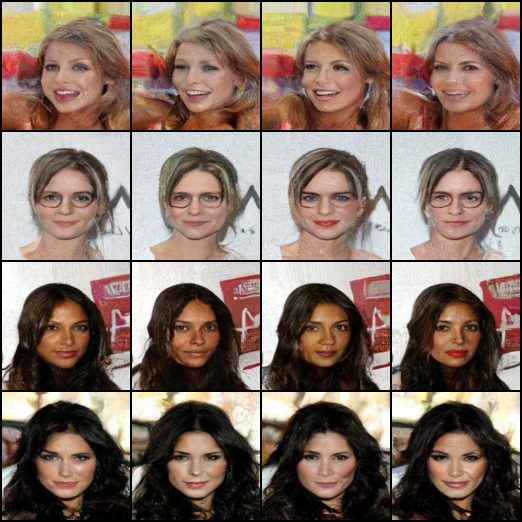}
        \caption{\centering DSBM \citep{shi2023diffusion}}
    \end{subfigure}
        \begin{minipage}{0.01\textwidth}
        \centering
        \vspace{-60mm}
        \rotatebox{90}{\textbf{High stochasticity}}
    \end{minipage}
    \begin{subfigure}[b]{0.11\textwidth}
        \centering
        \includegraphics[width=0.652\linewidth]{celeba/celeba_128_f_start_data_samples.png}
        \caption{\centering ${x\sim p_0}$}
    \end{subfigure}
    \begin{subfigure}[b]{0.285\textwidth}
        \centering
        \includegraphics[width=0.995\linewidth]{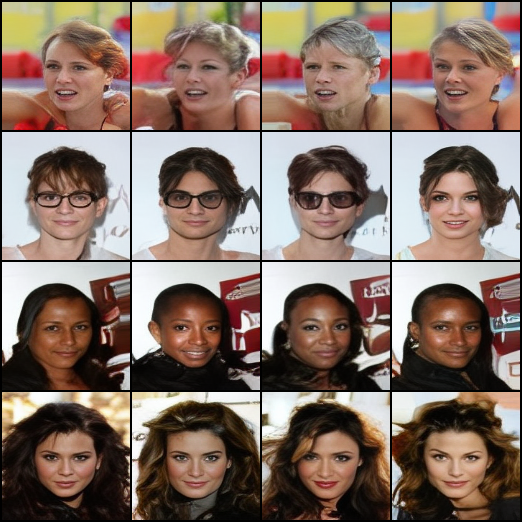}
        \caption{CSBM (\textbf{ours})}
    \end{subfigure}
    \begin{subfigure}[b]{0.285\textwidth}
        \centering
        \includegraphics[width=0.995\linewidth]{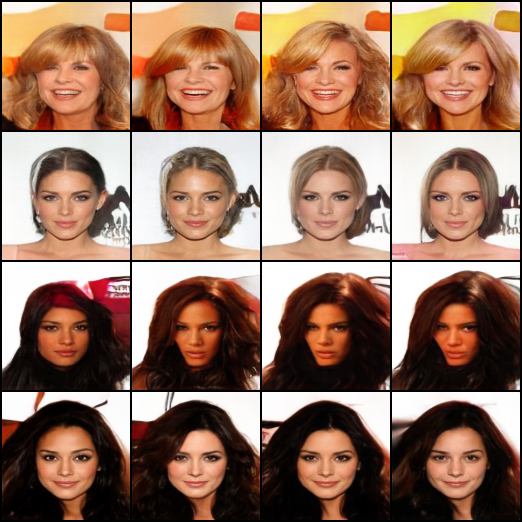}
        \caption{\centering ASBM \citep{gushchin2024adversarial}}
    \end{subfigure}
    \begin{subfigure}[b]{0.285\textwidth}
        \centering
        \includegraphics[width=0.995\linewidth]{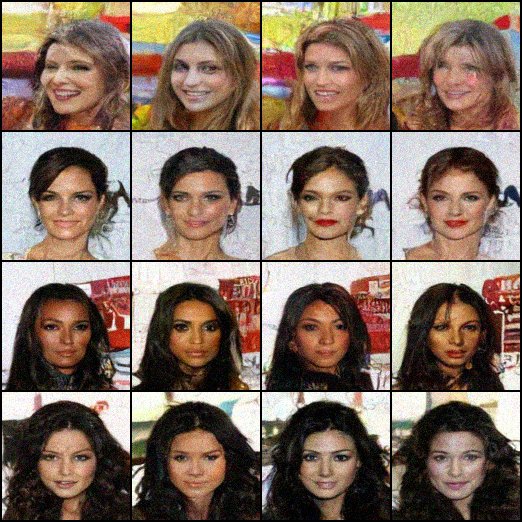}
        \caption{\centering DSBM \citep{shi2023diffusion}}
    \end{subfigure}

    \caption{\centering Comparison of \textit{male} $\rightarrow$ \textit{female} translation on the CelebA $128 \times 128$ dataset using CSBM (ours), ASBM, and DSBM. ASBM and DSBM operate in continuous pixel space, whereas CSBM operates in a discrete latent space of VQ-GAN \citep{esser2021taming}. The low-stochasticity setting for CSBM corresponds to $\alpha=0.005$, while the high-stochasticity setting corresponds to $\alpha=0.01$ of the reference process $q^{\text{unif}}$. The images for ASBM and DSBM are taken from \citep{gushchin2024adversarial}.}
    \label{fig:celeba_images}
\end{figure*}  

The results in Figure \ref{fig:cmnist-images} suggest that even with a low $N=2$, the generated outputs maintain decent visual quality and preserve the color. However, some pixelation appears in the samples, which is likely due to the factorization of the learned process (recall \wasyparagraph\ref{sec:parametrization}). The effect declines slightly as $N$ increases, reflecting a trade-off between model simplicity and the ability to capture inter-feature dependencies. Moreover, it can be observed that similarity reduces proportionally to $N$. We hypothesize that this issue is related to underfitting, since all models were trained with the same number of gradient updates. Presumably, a larger $N$ requires proportionally more updates to adequately train all transition probabilities \eqref{eq:parameterization}. Additionally, we \underline{experiment with $q^{\text{unif}}$} with details provided in Appendix \ref{apx:unif-cmnist-exp}.

\subsection{Unpaired Translation of CelebA Faces}
\label{sec:celeba-exp}

Here, we present an unpaired image-to-image translation experiment on the CelebA dataset using vector quantization. Specifically, we focus on translating images from the \textit{male} to the \textit{female} domain. We train VQ-GAN autoencoder \citep{esser2021taming} to represent $128\times 128$ images as $D=256$ features with $S=1024$ categories (a.k.a. the codebook). This formulation reduces complexity, as the data to be modeled has a dimensionality of $S^{D}=1024^{256}$. Indeed, this is smaller than the raw colored MNIST image space (\wasyparagraph\ref{sec:cmnist-exp}) and considerably smaller than the raw pixel space of CelebA. As there is no clear relation between the elements of the codebook, we use uniform reference $q^{\text{ref}}$. We test $\alpha\in\{0.005, 0.01\}$ and $N=100$.

For completeness, we compare our CSBM method with ASBM \citep{gushchin2024adversarial} and DSBM \citep{shi2023diffusion}, which operate in the continuous pixel space. For the rationale behind not training them in \underline{the latent space}, see Appendix \ref{apx:vq-dsbm}. We take their results from \citep[\wasyparagraph 4.2]{gushchin2024adversarial}. Qualitatively, we achieve comparable visual results (Figure \ref{fig:celeba_images}). Notably, the background remains nearly identical across all images for CSBM, which is not the case for all other methods, especially in high stochasticity setups.

\begin{table}[!h]
    \caption{Metrics comparison of CSBM (\textbf{ours}), \citep[ASBM]{gushchin2024adversarial}, and \citep[DSBM]{shi2023diffusion} for unpaired \textit{male} $\rightarrow$ \textit{female} translation on the CelebA $128 \times 128$ dataset.}
    \centering
    \resizebox{\columnwidth}{!}{%
    \begin{tblr}{rcccccc}
        \toprule
         & \SetCell[c=3]{c} Low stochasticity & & & \SetCell[c=3]{c} High stochasticity \\
        \cline[0.8pt]{2-4} \cline[0.8pt]{5-7}
        Metric & {CSBM \\ $\alpha=0.005$} & {ASBM \\ $\epsilon=1$} & {DSBM \\ $\epsilon=1$} & {CSBM \\ $\alpha=0.01$} & {ASBM \\ $\epsilon=10$} & {DSBM \\ $\epsilon=10$} \\
        \midrule  
        FID ($\downarrow$) & \textbf{10.60} & 16.86 & 24.06 & \textbf{14.68} & 17.44 & 92.15 \\
        \midrule
        CMMD ($\downarrow$) & \textbf{0.165} & 0.216 & 0.365 & \textbf{0.212} & 0.231 & 1.140 \\
        \hline
        LPIPS ($\downarrow$) & \textbf{0.175} & 0.242 & 0.246 & \textbf{0.170} & 0.294 & 0.386 &  \\
        \bottomrule
    \end{tblr}}
    \label{tab:metrics}
\end{table}

The standard FID \citep{heusel2017gans}, CMMD \citep{jayasumana2024rethinking}, and LPIPS \citep{zhang2018unreasonable} metrics comparison in Table \ref{tab:metrics} quantitatively demonstrates that our approach achieves better results than the other methods on the test set. Still, it is important to note that our experiments are conducted with $N=100$ in D-IMF, which is higher than the $N = 3$ used in continuous-space D-IMF in ASBM, i.e., the trade-off between the number of time steps $N$ and the generation quality should be taken into account.

\section*{Acknowledgements} 
The work was supported by the grant for research centers in the field of AI provided by the Ministry of Economic Development of the Russian Federation in accordance with the agreement 000000C313925P4F0002 and the agreement with Skoltech №139-10-2025-033.

\section*{Impact Statement} 
This paper presents work whose goal is to advance the field of Machine Learning. There are many potential societal consequences of our work, none of which we feel must be specifically highlighted here.

\bibliography{references}
\bibliographystyle{icml2025}

\newpage
\appendix
\onecolumn

\section{Limitations} 
\label{apx:limitations}
One limitation of the proposed algorithm stems from the factorization of the transitional probabilities (see \wasyparagraph\ref{sec:parametrization}). This simplification comes at the cost of losing some information, as dependencies between features at the same step are not explicitly accounted for. However, it should be taken into account that this limitation is inherent to the most modern flow-based \citep{campbell2024generative, gat2024discrete} and diffusion-based \citep{hoogeboom2021argmax, austin2021structured} methods for discrete data. Recent approaches aim to address this issue by modeling the transition joint distribution using copulas \citep{liu2024discrete} or energy functions \citep{xu2024energy}. Additionally, in the Colored MNIST experiments (\wasyparagraph\ref{sec:cmnist-exp}) the reference process varies slightly with $N$, due to implementation specifics. The MSE between cumulative transition matrices is bounded by $10^{-4}$, confirming that the induced discrepancies are statistically insignificant. Thus, these differences are negligible and do not impact the experiment’s goals or broader implications.

\section{Proofs}
\label{apx:proofs}
\begin{proof}[Proof of Theorem \ref{thm:main}]
    As stated in the theorem, we consider a process $q(x_0, x_{\text{in}}, x_1) \in \Pi_{N}(p_0, p_1)$  with $N \geq 1$ intermediate time steps that is both Markov and reciprocal and a reference Markov process $q^{\text{ref}}\in\mathcal{M}(\mathcal{X}^{N+2})$. We focus on the joint distribution of the boundary elements $x_0$, $x_1$, and a selected intermediate state $x_{t_n}$, where $n \in [1, N]$. This distribution, $p(x_0, x_{t_n}, x_1)$, can be expressed in two equivalent ways using the Markov or the reciprocal properties:
    \begin{equation*}
        \underbrace{q(x_0, x_1)q^{\text{ref}}(x_{t_n} | x_0, x_1)}_{\text{by reciprocal property}} = q(x_0, x_{t_n}, x_1) = \underbrace{p(x_0)q(x_{t_n} | x_0)q(x_1 | x_{t_n})}_{\text{by Markov property}}.
    \end{equation*}
    
    Rearranging this equation and applying the logarithm thus we get:
    \begin{equation*}
        \log q(x_1|x_0) = \log q(x_t|x_0) + \log q(x_1|x_{t_n}) - \log q^{\text{ref}}(x_{t_n}|x_0, x_1).
    \end{equation*}

    Note that all the probability terms are strictly positive by the theorem's assumption. The knowledge that the last term $\log q^{\text{ref}}(x_{t_n}|x_0, x_1)$ is Markov leads to following equation:
    \begin{multline*}
        \log q(x_1|x_0) = \log q(x_{t_n}|x_0) + \log q(x_1|x_{t_n}) - \log \left(\frac{ q^{\text{ref}}(x_0)q^{\text{ref}}( x_{t_n}|x_0)q^{\text{ref}}( x_1|x_{t_n})}{q^{\text{ref}}(x_0, x_1)}\right)= \\ = \underbrace{\log q(x_{t_n}|x_0) - \log q^{\text{ref}}( x_{t_n}|x_0)- \log q^{\text{ref}}(x_0)}_{\defeq f_0(x_0,x_{t_n})} + \underbrace{\log q(x_1|x_{t_n}) - \log q^{\text{ref}}( x_1|x_{t_n})}_{\defeq f_1(x_{t_n},x_1)} + \log q^{\text{ref}}(x_0, x_1).
    \end{multline*}

    Thus, we get:
    \begin{equation}
        \label{eq:sum_of_f}
        f(x_0, x_1) \defeq \log q(x_1|x_0) - \log q^{\text{ref}}(x_0, x_1) = f_0(x_0,x_{t_n}) + f_1(x_{t_n},x_1).
    \end{equation}

    Notably, $f(x_0, x_1)$ can be represented as a sum of two single-variable functions, $g_0(x_0)$ and $g_1(x_1)$. This could be observed by setting $x_1 = x^{\dagger}$ in \eqref{eq:sum_of_f}, where $x^{\dagger}\in\mathcal{X}$ is some fixed point in the state space. Indeed, we have:
    \begin{equation*}
        f(x_0, x_1) - f(x_0, x^{\dagger}) = \cancel{f_0(x_0, x_{t_n})} + f_1(x_{t_n}, x_1) - \cancel{f_0(x_0, x_{t_n})} - f_1(x_{t_n}, x^{\dagger}) = f_1(x_{t_n}, x_1) - f_1(x_{t_n}, x^{\dagger}).
    \end{equation*}
    Fixing $x_1=x^{\dagger}$ makes $f(x_0,x^{\dagger})$ depend only on $x_0$, so, we define $g_0(x_0)\defeq f(x_0,x^{\dagger})$. Likewise, with fixed $x_{t_n}$, the difference $f(x_0,x_1)-f(x_0,x^{\dagger})$ depends only on $x_1$. Thus, we set $g_1(x_1)\defeq f(x_0,x_1)-f(x_0,x^{\dagger})$. Finally, we obtain:
    \begin{equation*}
        \log q(x_1 | x_0) = g_0(x_0) + g_1(x_1) + \log q^{\text{ref}}(x_0, x_1).
    \end{equation*}
    
    Exponentiating both sides and multiplying by $p(x_0)$, we derive:  
    \begin{equation*}
        q(x_0, x_1) = \underbrace{e^{g_0(x_0)}}_{\psi(x_0)} q^{\text{ref}}(x_1 | x_0) \underbrace{e^{g_1(x_1)}}_{\phi(x_1)}.
    \end{equation*}
    
    According to \citep[Theorem 2.8]{leonard2013survey}, this formulation describes the optimal transport plan $q^*$ for the Static Schrödinger Bridge problem between $p_0$ and $p_1$. Alternatively, this can be derived as in \citep{gushchin2024adversarial}. Given that the assumption of the theorem ensures $q(x_{\text{in}} | x_0, x_1) = q^{\text{ref}}(x_{\text{in}} | x_0, x_1)$, it follows that $q(x_0, x_{\text{in}}, x_1)$ is a dynamic Schrödinger Bridge $q^*(x_0, x_{\text{in}}, x_1)$.
\end{proof}

\begin{proof}[Proof of Proposition \ref{prop:markov-proj}]
    Thanks to \citep[Proposition 3.5]{gushchin2024adversarial}, it is known that
    \begin{equation}
        [proj_{\mathcal{M}}(q)](x_0, x_{\text{in}}, x_1) = \argmin_{m\in\mathcal{M}(\mathcal{X}^{N+2})} \KL{q(x_0, x_{\text{in}}, x_1)}{m(x_0, x_{\text{in}}, x_1)},
    \end{equation}
    where $q \in \mathcal{R}^{\text{ref}}(\mathcal{X}^{N+2})$ is a reciprocal process. Thus, we can decompose this KL divergence as follows:
    \begin{multline}
        \label{eq:decomp_eq_1}
        \KL{q(x_0, x_{\text{in}}, x_1)}{m(x_0, x_{\text{in}}, x_1)} = \mathbb{E}_{q(x_0, x_{\text{in}}, x_1)} \log \frac{q(x_0, x_{\text{in}}, x_1)}{m(x_0, x_{\text{in}}, x_1)} \\
        = \mathbb{E}_{q(x_0, x_{\text{in}}, x_1)} \log \frac{{\color{blue} p_0(x_0)} q(x_1 | x_0) \color{violet} q^{\text{ref}}(x_{\text{in}} | x_0, x_1)}{\color{blue} m(x_0) \color{red} m(x_1 | x_{t_N}) \color{violet} \prod_{n=1}^{N} m(x_{t_{n}} | x_{t_{n-1}})}.
    \end{multline}
    Here, the denominator can be represented this way because $m$ is a Markov process, while the numerator is expressed using the reciprocal property of $q$. Next, we separate the corresponding colored terms, leading to:
    \begin{multline}
        \label{eq:decomp_eq_2}
        \eqref{eq:decomp_eq_1} = \underbrace{\color{red} -\mathbb{E}_{q(x_0, x_{t_N}, x_1)}\left[\log m(x_1 | x_{t_N})\right]}_{\color{red}L_1} + {\color{violet} \mathbb{E}_{q(x_0, x_{\text{in}}, x_1)} \log \frac{\color{violet} \prod_{n=1}^{N} q^{\text{ref}}(x_{t_{n}} | x_{t_{n-1}}, x_1)}{\prod_{n=1}^{N} m(x_{t_{n}} | x_{t_{n-1}})}} + \\ + \underbrace{\color{blue} \KL{p_0(x_0)}{m(x_0)}}_{\color{blue} L_0} + \underbrace{ \mathbb{E}_{q(x_1, x_0)} \left[ \log q(x_1 | x_0) \right]}_{C_1}.
    \end{multline}
    
    Rewriting the product inside the logarithm ({\color{violet} violet term}) as a sum of KL divergences, we obtain the following equation:
    \begin{equation}
        \label{eq:decomp_eq_3}
        \eqref{eq:decomp_eq_2} = {\color{red}L_1}+{\color{violet}\sum_{n=1}^{N} \mathbb{E}_{q(x_1, x_{t_{n-1}})} \KL{q^{\text{ref}}(x_{t_{n}} | x_{t_{n-1}}, x_1)}{m(x_{t_{n}} | x_{t_{n-1}})}} + {\color{blue}L_0} + C_1.
    \end{equation}
    
    We observe that, by construction, the Markov process $m$ preserves the terminal distribution when represented in a forward manner \eqref{eq:markov_proj}, i.e., $m(x_0) = p_0(x_0)$. Consequently, {\color{blue}$L_0$} can be omitted since $\text{KL} = 0$, which completes the proof:
    \begin{equation}
        \label{eq:decomp_eq_4}
        \eqref{eq:decomp_eq_3} = {\color{red}L_1} + {\color{violet} \sum_{n=1}^{N}  \mathbb{E}_{q(x_1, x_{t_{n-1}})} \text{KL}(q^{\text{ref}}(x_{t_{n}} | x_{t_{n-1}}, x_1) || m(x_{t_{n}} | x_{t_{n-1}}))} + C_1.
    \end{equation}
    
    Additionally, because the Markovian projection \eqref{eq:markov_proj} leaves the neighbouring-time joint distribution $q(x_{t_{n-1}},x_{t_n})$ unchanged, we can train $m$ with the alternative objective:
    \begin{multline}
        \label{eq:decomp_eq_5}
        \KL{q(x_0, x_{\text{in}}, x_1)}{m(x_0, x_{\text{in}}, x_1)} = \mathbb{E}_{q(x_0, x_{\text{in}}, x_1)} \log \frac{q(x_0, x_{\text{in}}, x_1)}{m(x_0, x_{\text{in}}, x_1)} = \\ = {\color{violet} \sum^{N+1}_{n=1}  \mathbb{E}_{q(x_{t_{n-1}})} \KL{q(x_{t_{n}} | x_{t_{n-1}})}{m(x_{t_{n}} | x_{t_{n-1}})}} + {\color{blue} \underbrace{\KL{ p_0(x_0)}{m(x_0)}}_{L_0}}.
    \end{multline}

    Similarly, we discard {\color{blue} $L_0$}, leaving us with an objective that minimizes the divergence between one-step transition probabilities of the given process $q$ and the desired Markov process $m$.
\end{proof}

\section{Additional Experiments}
\subsection{Alternative Losses}
\label{apx:alt-losses}
Proposition \ref{prop:markov-proj} shows that two equivalent KL-based training objectives yield the same optimal solution. This naturally suggests a generalization to a broader class of divergences $D$. 

\paragraph{The Original Objective.} First, let us consider the original objective function given in \eqref{eq:decomp_eq_4}. To ensure that substituting an alternative divergence does not alter its minima, the replacement must be equivalent in this context. Specifically, the ${\color{red}L_1}$ term can be reformulated as the KL divergence between a Kronecker delta distribution and the transition distribution of $m$, i.e.:

\begin{multline*}
    L_1 = -\mathbb{E}_{q(x_0, x_{t_N}, x_1)}\left[\log m(x_1 | x_{t_N})\right] = \mathbb{E}_{q(x_0, x_{t_N}, x_1)} \mathbb{E}_{\delta_{x_1}(\widetilde{x}_1)}\left[\log \frac{\delta_{x_1}(\widetilde{x}_1)}{m(\widetilde{x}_1 | x_{t_N})}\right] = \\ = \mathbb{E}_{q(x_0, x_{t_N}, x_1)} \KL{\delta_{x_1}(\widetilde{x}_1)}{m(\widetilde{x}_1 | x_{t_N})} = \mathbb{E}_{q(x_0, x_{t_N}, x_1)} \KL{q(x_1 | x_{t_N}, x_1)}{m(\widetilde{x}_1 | x_{t_N})}.
\end{multline*}

Consequently, the ${\color{red}L_1}$ term can be moved under the sum of the {\color{violet}violet term}, leading to:
\begin{equation*}
    \eqref{eq:decomp_eq_4} = {\color{violet} \sum_{n=1}^{{\color{red}N+1}}  \mathbb{E}_{q(x_1, x_{t_{n-1}})} \KL{q^{\text{ref}}(x_{t_{n}} | x_{t_{n-1}}, x_1)}{m(x_{t_{n}} | x_{t_{n-1}})}} + C_1.
\end{equation*}

By restricting the choice of divergences to the Bregman family, we ensure that the minimum is attained at the same value, namely, $\mathbb{E}_{q(x_1 \mid x_{t_{n-1}})} \left[q^{\text{ref}}(x_{t_n} | x_{t_{n-1}}, x_1)\right] = q(x_{t_n} | x_{t_{n-1}})$ \citep{banerjee2005clustering}. Thus, any Bregman divergence can be used as the objective. As an example, we consider the MSE loss as an alternative to the KL divergence:
\begin{multline}
    \argmin_{m \in \mathcal{M}(\mathcal{X}^{N+2})}\sum_{n=1}^{N+1}  \mathbb{E}_{q(x_1, x_{t_{n-1}})} \KL{q^{\text{ref}}(x_{t_{n}} | x_{t_{n-1}}, x_1)}{m(x_{t_{n}} | x_{t_{n-1}})} = \\ = \argmin_{m \in \mathcal{M}(\mathcal{X}^{N+2})}\sum_{n=1}^{N+1}\mathbb{E}_{q( x_1, x_{t_{n-1}})}\Bigl[q^{\text{ref}}(x_{t_n} | x_{t_{n-1}}, x_1) - m(x_{t_n} | x_{t_{n-1}})\Bigr]^2
    \label{eq:second-mse-objective}
\end{multline}

Applying this loss parametrization from \wasyparagraph\ref{sec:parametrization} and repeating the derivation leads to the following objectives:
\begin{gather}
   L_{\text{MSE}}(\theta) = \sum_{n=1}^{N+1}\mathbb{E}_{q(x_0,x_1)}\mathbb{E}_{q^{\text{ref}}(x_{t_{n-1}} | x_0, x_1)} \Bigr[q^{\text{ref}}(x_{t_n}|x_{t_{n-1}}, x_1) - \mathbb{E}_{\widetilde{q}_{\theta}(\widetilde{x}_1 | x_{t_{n-1}})}[q^{\textup{ref}}(x_{t_{n}}|x_{t_{n-1}},\widetilde{x}_1)]\Bigl]^2 \label{eq:mse-forward-loss}, \\
   L_{\text{MSE}}(\eta) = \sum_{n=1}^{N+1}\mathbb{E}_{q(x_0,x_1)}\mathbb{E}_{q^{\text{ref}}(x_{t_{n}} | x_0, x_1)} \Bigr[q^{\text{ref}}(x_{t_{n-1}}|x_{t_n}, x_0) - \mathbb{E}_{\widetilde{q}_{\eta}(\widetilde{x}_0 | x_{t_{n}})}[q^{\textup{ref}}(x_{t_{n-1}}|x_{t_{n}},\widetilde{x}_0)]\Bigl]^2, \label{eq:mse-backward-loss}
\end{gather}
for forward and backward parametrization, respectively. To test the MSE loss, we repeat the 2D domain translation experiment between the \textit{Gaussian} and \textit{Swiss-Roll} distributions. It could be observed that the generated samples and trajectories with the MSE loss in Figure \ref{fig:mse-toy-images} appear visually similar to those obtained using the KL loss shown in Figure \ref{fig:toy-images}.

\begin{figure}[h]
    \centering
    \begin{subfigure}[b]{0.19\linewidth}
        \centering
        \includegraphics[width=0.995\linewidth]{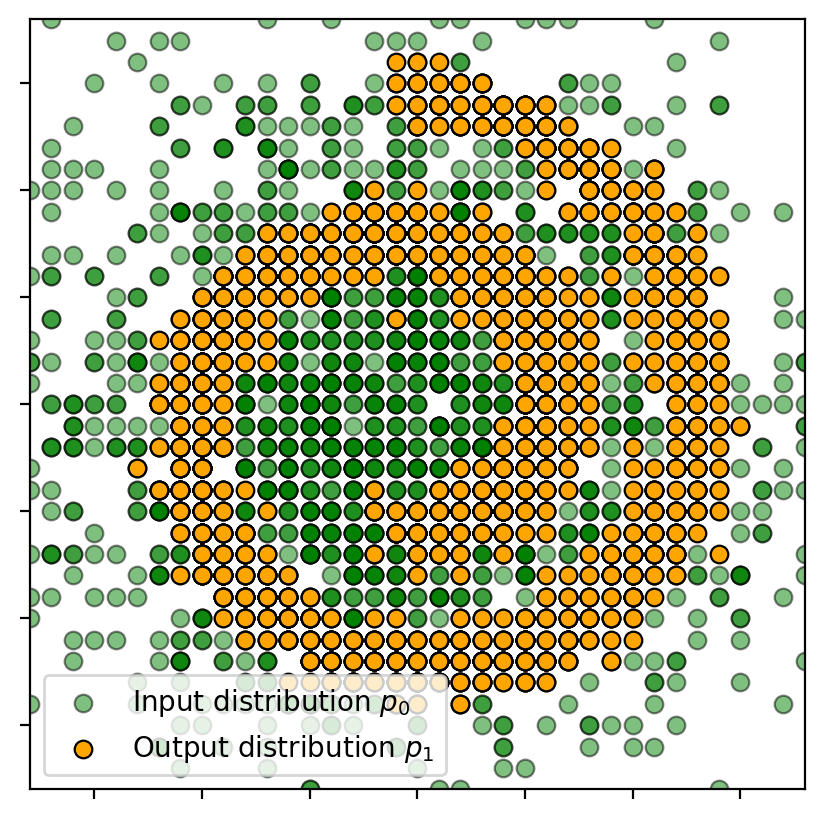}
        \caption{\centering $x_0\sim p_0$, $x_1\sim p_1$.\newline}
    \end{subfigure}
    \begin{subfigure}[b]{0.19\linewidth}
        \centering
        \includegraphics[width=0.995\linewidth]{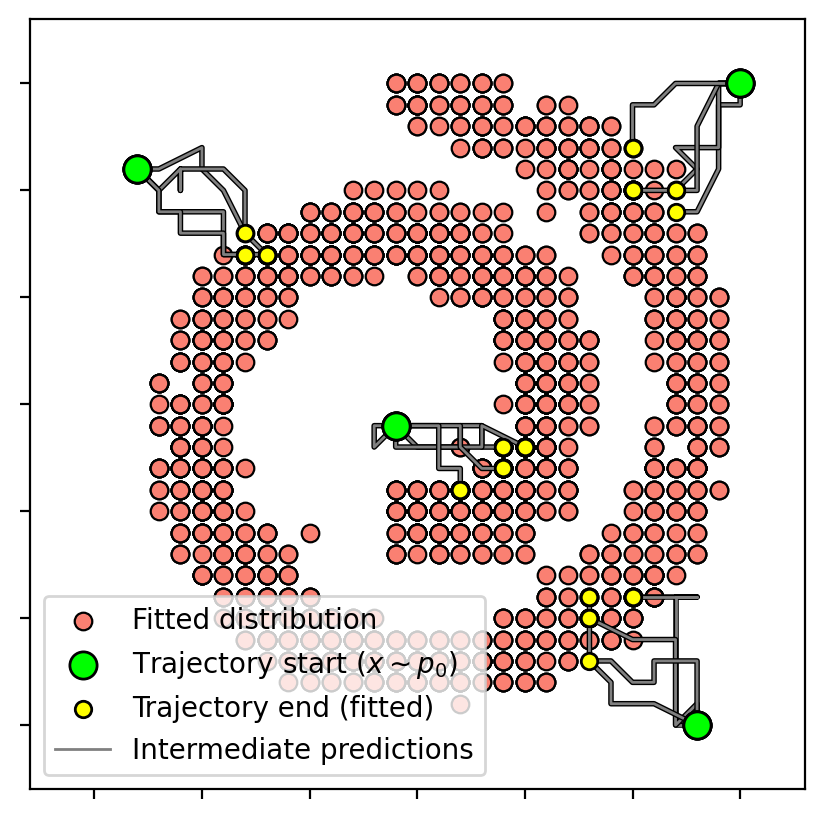}
        \caption{\centering Low stochasticity, \newline $q^{\text{gauss}}$ with $\alpha=0.02$.}
    \end{subfigure}
    \begin{subfigure}[b]{0.19\linewidth}
        \centering
        \includegraphics[width=0.995\linewidth]{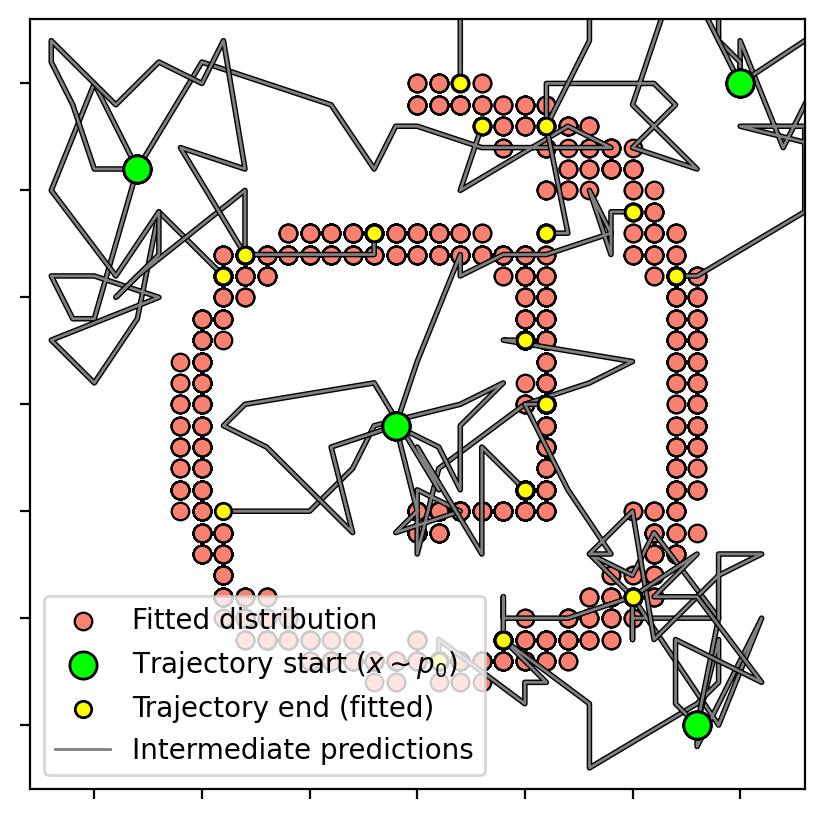}
        \caption{\centering High stochasticity,\newline$q^{\text{gauss}}$ with $\alpha=0.05$.}
    \end{subfigure}
    \begin{subfigure}[b]{0.19\linewidth}
        \centering
        \includegraphics[width=0.995\linewidth]{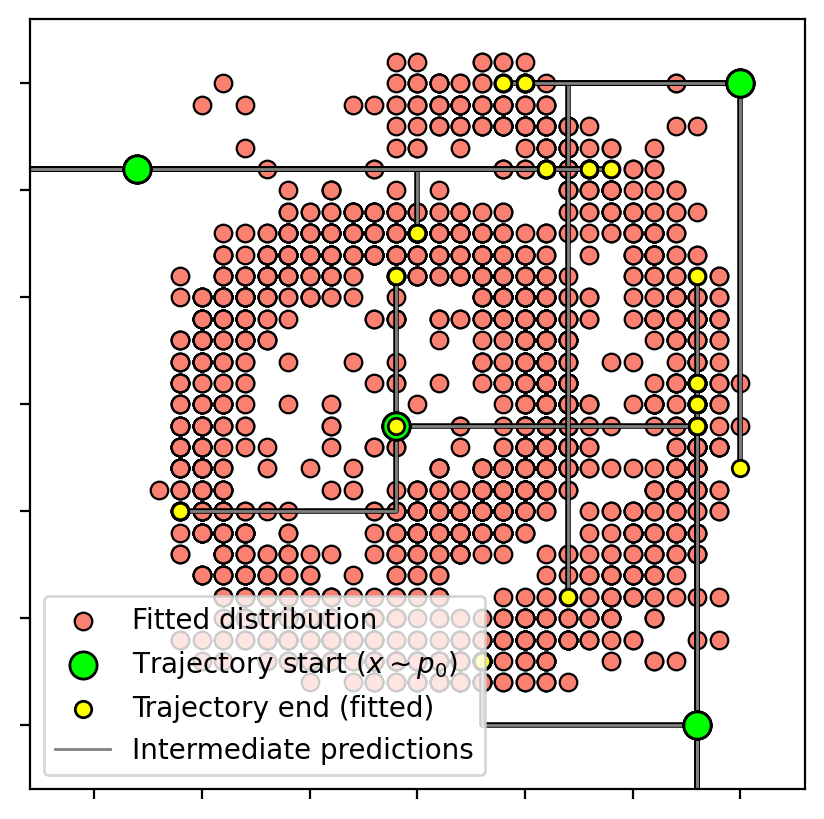}
        \caption{\centering Low stochasticity,\newline$q^{\text{unif}}$ with $\alpha=0.005$.}
    \end{subfigure}
    \begin{subfigure}[b]{0.19\linewidth}
        \centering
        \includegraphics[width=0.995\linewidth]{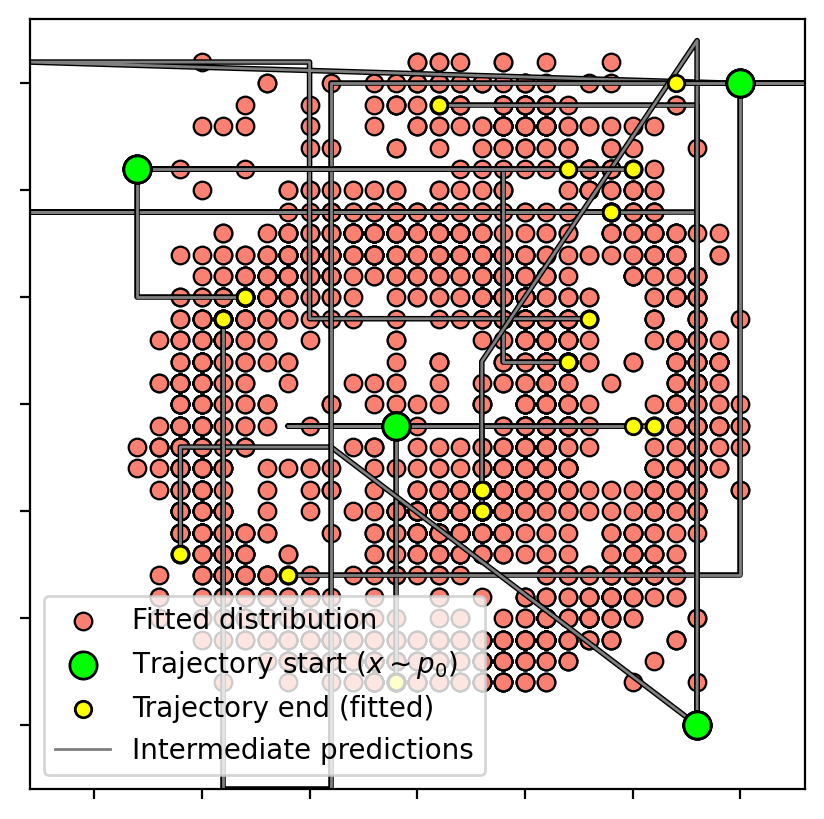}
        \caption{\centering High stochasticity,\newline$q^{\text{unif}}$ with $\alpha=0.01$.}
    \end{subfigure}
    \caption{SB between 2D \textit{Gaussian} and \textit{Swiss-Roll} distributions learned by our CSBM algorithm with MSE loss in Equations \eqref{eq:mse-forward-loss} and \eqref{eq:mse-backward-loss} for different reference processes $q^{\text{unif}}$ and $q^{\text{gauss}}$ with varying parameters $\alpha$.}
    \label{fig:mse-toy-images}
\end{figure}

\paragraph{The Alternative Objective.} Analogous reasoning extends to the alternative objective in \eqref{eq:decomp_eq_5}. Although the conditional distribution $q(x_{t_n} | x_{t_{n-1}})$ is generally unavailable in closed form, it can be sampled. This property suggests employing an adversarial training strategy, following the approach in \citep{gushchin2024adversarial}.

\subsection{Unpaired Translation on Colored MNIST with $q^{\text{unif}}$}
\label{apx:unif-cmnist-exp}

We perform an additional Colored-MNIST experiment using a uniform reference process $q^{\text{unif}}$.
Here we set $N = 25$ and test $\alpha \in \{0.01, 0.05\}$. Mini-batch OT is not applied at the D-IMF 1 iteration.
The samples in Figure \ref{fig:unif-cmnist-images} demonstrate the failure to match the digit colors, showing that a uniform transition matrix is not suitable for this domain.

\begin{figure}[h]
    \centering
    \begin{subfigure}[b]{0.1\textwidth}
        \centering
        \includegraphics[width=0.493\linewidth]{cmnist/cmnist_pics_orig_vert.png}
        \caption{$x \sim p_0$.}
    \end{subfigure}
    \begin{subfigure}[b]{0.19\textwidth}
        \centering
        \includegraphics[width=0.995\linewidth]{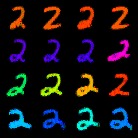}
        \caption{$\alpha=0.01$.}
    \end{subfigure}
    \begin{subfigure}[b]{0.19\textwidth}
        \centering
        \includegraphics[width=0.995\linewidth]{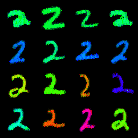}
        \caption{$\alpha=0.05$.}
    \end{subfigure}
    \hfill
    \begin{subfigure}[b]{0.1\textwidth}
        \centering
        \includegraphics[width=0.493\linewidth]{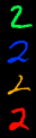}
        \caption{$x \sim p_1$.}
    \end{subfigure}
    \begin{subfigure}[b]{0.19\textwidth}
        \centering
        \includegraphics[width=0.995\linewidth]{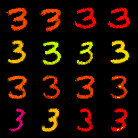}
        \caption{$\alpha=0.01$.}
    \end{subfigure}
    \begin{subfigure}[b]{0.19\textwidth}
        \centering
        \includegraphics[width=0.995\linewidth]{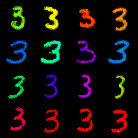}
        \caption{$\alpha=0.05$.}
    \end{subfigure}
    \caption{\centering Results of colored digits unpaired translation learned by our CSBM algorithm with reference process $q^{\text{unif}}$ and varying stochasticity parameter $\alpha$.}
    \label{fig:unif-cmnist-images}
\end{figure}

\subsection{Continuous Methods in Latent Space}
\label{apx:vq-dsbm}
For completeness, we also trained DSBM in the latent space. For a fair comparison, we train DSBM on the same latent space used for CSBM, following the approach in \citep[Appendix G]{rombach2022high}. Concretely, because the decoder expects discrete tokens, our pipeline proceeds as follows: (1) map the images to their continuous latent representations, (2) apply DSBM in this continuous space, (3) vector-quantize the resulting latents, and (4) pass the quantized tokens through the decoder. Unfortunately, the results are not satisfactory, as the model tended to collapse to the identity mapping with $\epsilon=1$ and $\epsilon=10$ (see Figure \ref{fig:vq_dsbm_celeba_images}). Due to these limitations, we do not proceed with training ASBM and choose not to compare both methods with CSBM in such settings. One may ask why CSBM performs better in this setting. We hypothesize that this is due to the choice of the reference process $q^{\text{unif}}$, which is better suited to the VQ-GAN latent space.

\begin{figure}[h]
    \centering
    \begin{subfigure}[b]{0.11\textwidth}
        \centering
        \includegraphics[width=0.5\linewidth]{celeba/celeba_128_f_start_data_samples.png}
        \caption{\centering ${x\sim p_0}$.}
    \end{subfigure}
    \begin{subfigure}[b]{0.22\textwidth}
        \centering
        \includegraphics[width=0.995\linewidth]{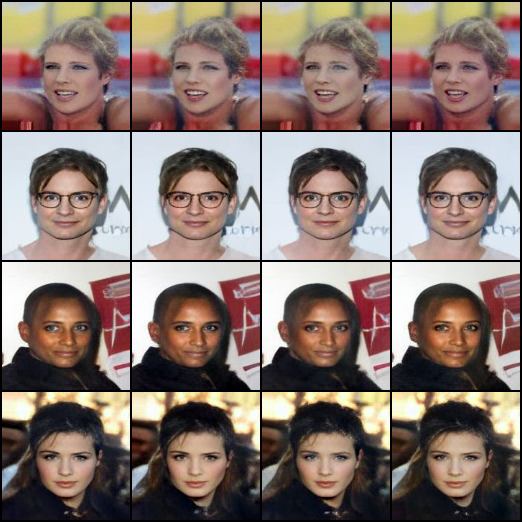}
        \caption{\centering $\epsilon=1$.}
    \end{subfigure}
    \begin{subfigure}[b]{0.22\textwidth}
        \centering
        \includegraphics[width=0.995\linewidth]{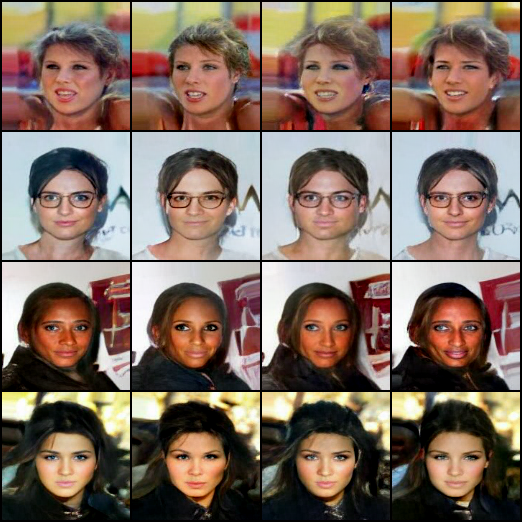}
        \caption{\centering $\epsilon=10$.}
    \end{subfigure}
    \\
    \begin{subfigure}[b]{0.11\textwidth}
        \centering
        \includegraphics[width=0.5\linewidth]{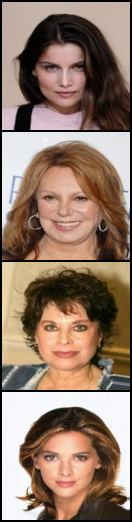}
        \caption{\centering ${x\sim p_1}$.}
    \end{subfigure}
    \begin{subfigure}[b]{0.22\textwidth}
        \centering
        \includegraphics[width=0.995\linewidth]{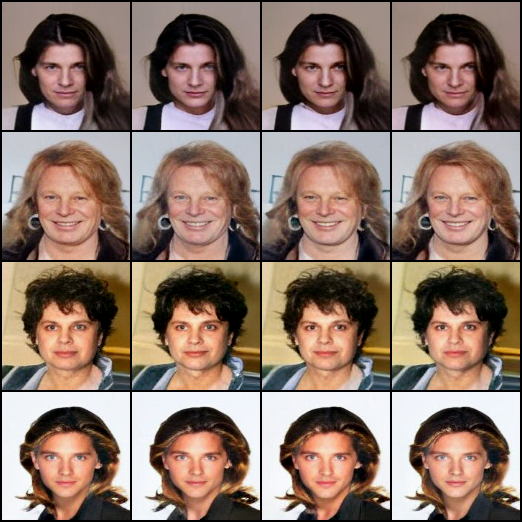}
        \caption{\centering $\epsilon=1$.}
    \end{subfigure}
    \begin{subfigure}[b]{0.22\textwidth}
        \centering
        \includegraphics[width=0.995\linewidth]{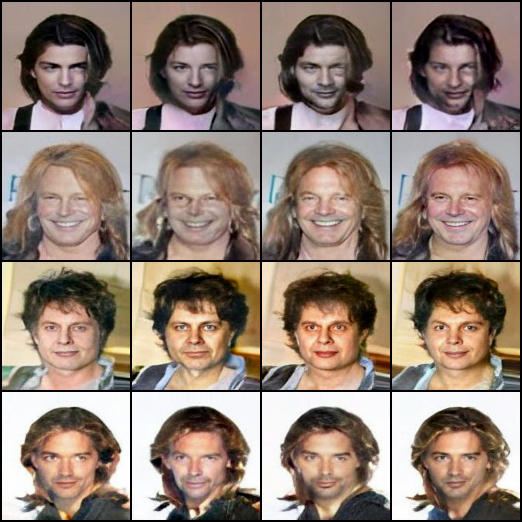}
        \caption{\centering $\epsilon=10$.}
    \end{subfigure}
    \caption{\centering Results of training DSBM \citep{shi2023diffusion} on VQ-GAN lantent space of CelebA. The VQ-GAN model is the same as in the main experiments (\wasyparagraph\ref{sec:celeba-exp}).}
    \label{fig:vq_dsbm_celeba_images}
\end{figure}

\subsection{Unpaired Text Style Transfer of Amazon Reviews}
\label{apx:amazon-exp}
This section examines the text domain, focusing on style transfer in the Amazon Reviews corpus \citep{ni2019justifying}. The task is to convert reviews with \textit{negative} sentiment into ones with \textit{positive} sentiment and vice versa. We adopt the filtered, pre-processed split of \citep{mukherjee2022balancing}. Reviews are tokenized with a unigram SentencePiece model \citep{kudo2018sentencepiece} that has a vocabulary size set to $S = 8\,192$. Each review is then padded or truncated to a fixed length of $D = 100$. We evaluate the uniform reference process $q^{\text{ref}}$ for $\alpha \in \{0.005, 0.01\}$. The reported scores are averaged over both transfer directions \textit{negative} $\leftrightarrow$ \textit{positive} and compared with baselines, using the metrics from \citep{mukherjee2022balancing}.

To mirror the image-domain protocol, we select analogous text metrics. Target alignment is measured with the Hugging Face pipeline’s default sentiment classifier, complemented by the negative log-likelihood (NLL) under GPT-2 Large \citep{radford2019language}. Similarity between the transferred text and its source is measured with BLEU \citep{papineni2002bleu}. Quantitative metrics appear in Table \ref{tab:amazon-metrics}, while representative samples are shown in Table \ref{tab:amazon-samples}.

CSBM excels at content preservation, achieving the highest BLEU score and the lowest NLL, indicating fluent, meaning-faithful rewrites. Its sentiment-transfer accuracy is lower than half of the methods, yet manual inspection of the samples in Table \ref{tab:amazon-samples} suggests that most generations convey the correct polarity.

\begin{table}[t]
    \centering
    \small
    \caption{Metrics comparison of CSBM (\textbf{ours}), CAAE \citep{shen2017style}, Del.\&Ret. \citep{li2018delete}, Seq2SentiSeq \citep{luo2019towards}, BST \citep{prabhumoye2018style}, FGIM \citep{wang2019controllable}, PST \citep{he2020probabilistic} and $\text{SCT}_1$ \citep{mukherjee2022balancing} for unpaired \textit{negative} $\leftrightarrow$ \textit{positive} style transfer on the Amazon Reviews dataset. Bold denotes the best value, and underline the second best. Metrics of baseline methods are taken from \citep{mukherjee2022balancing} and marked with a superscript $\dagger$.}
    \begin{tblr}{rcccccccc}
        \toprule
        Metric & \SetCell[c=1]{c,m}{CSBM \\ $\alpha=0.005$} & \SetCell[c=1]{c,m}{CSBM \\ $\alpha=0.01$} & \SetCell[c=1]{c,m} CAAE\textsuperscript{$\dagger$} & \SetCell[c=1]{c,m}Del.\&Ret.\textsuperscript{$\dagger$} & \SetCell[c=1]{c,m}Seq2SentiSeq\textsuperscript{$\dagger$} & \SetCell[c=1]{c,m}BST\textsuperscript{$\dagger$} & \SetCell[c=1]{c,m}FGIM\textsuperscript{$\dagger$}  & \SetCell[c=1]{c,m}PST\textsuperscript{$\dagger$} & \SetCell[c=1]{c,m}$\text{SCT}_1$\textsuperscript{$\dagger$} \\
        \midrule  
        Accuracy ($\uparrow$) & 79.3 & 76.5 & 88.6 & 69.9 & \underline{92.4} & \textbf{93.5} & 79.3 & 91.5  & 82.0 \\
        \midrule
        NLL ($\downarrow$) & \textbf{5.4} & \textbf{5.4} & 74.0 & 85.1 & \underline{42.0} & 61.0 & 116.8 & 65.9  & 79.6 \\
        \hline
        BLEU ($\uparrow$) & \underline{72.5} & \textbf{74.8} & 3.2 & 14.7 & 0.0 & 0.9 & 10.6 & 9.5  & 13.7 \\
        \bottomrule
    \end{tblr}
    \label{tab:amazon-metrics}
\end{table}

\begin{table}[t]
    \centering
    \caption{Style transfers of CSBM (\textbf{ours}), Del.\&Ret. \citep{li2018delete}, BST \citep{prabhumoye2018style}, FGIM \citep{wang2019controllable}, PST \citep{he2020probabilistic} and $\text{SCT}_1$ \citep{mukherjee2022balancing} on Amazon Reviews dataset. Samples of baseline methods are taken from \citep{mukherjee2022balancing} and marked with a superscript $\dagger$.}
    \label{tab:amazon-samples}
    \resizebox{\linewidth}{!}{%
    \begin{tblr}{c|c|c} 
        \toprule
         & \textit{negative} $\rightarrow$ \textit{positive} & \textit{positive} $\rightarrow$ \textit{negative} \\ 
        \midrule
        \textbf{Source} & \SetCell[c=1]{l,m}\textbf{\textit{movie was a waste of money : this movie totally sucks }.} & \SetCell[c=1]{l,m}\textbf{\textit{my daughter loves them : )}} \\
        \midrule
        \SetCell[c=1]{c,m}{CSBM\\$\alpha=0.005$} &  \SetCell[c=1]{l,m}\textit{movie was great value for the money : this movie totally wass .} & \SetCell[c=1]{l,m}\textit{my daughter hates them :(} \\
        \SetCell[c=1]{c,m}{CSBM\\$\alpha=0.01$} &  \SetCell[c=1]{l,m}\textit{movie was great value for the money : this movie totally superb .} & \SetCell[c=1]{l,m}\textit{my daughter hates them :(} \\
        Del.\&Ret.\textsuperscript{$\dagger$} &  \SetCell[c=1]{l,m}\textit{our favorite thing was a movie story : the dream class roll !} &  \SetCell[c=1]{l,m}\textit{my daughter said i was still not acknowledged .} \\
        BST\textsuperscript{$\dagger$} & \SetCell[c=1]{l,m}\textit{stan is always a great place to get the food .} & \SetCell[c=1]{l,m}\textit{do n't be going here .} \\
        FGIM\textsuperscript{$\dagger$} &  \SetCell[c=1]{l,m}\textit{movie is a delicious atmosphere of : this movie totally sucks movie !} &  \SetCell[c=1]{l,m}\textit{i should not send dress after me more than she would said not ?} \\
        PST\textsuperscript{$\dagger$} &  \SetCell[c=1]{l,m}\textit{this theater was a great place , we movie totally amazing .} & \SetCell[c=1]{l,m}\textit{yup daughter has left ourselves .} \\
        $\text{SCT}_1$\textsuperscript{$\dagger$} &  \SetCell[c=1]{l,m}\textit{movie : a great deal of money : this movie is absolutely perfect .} & \SetCell[c=1]{l,m}\textit{ my daughter hates it : my daughter .} \\
              
        \midrule
        \textbf{Source} & \SetCell[c=1]{l,m}\textbf{\textit{nothing truly interesting happens in this book .}} &  \SetCell[c=1]{l,m}\textbf{\textit{best fit for my baby : this product is wonderful ! !}} \\
        \midrule
        \SetCell[c=1]{c,m}{CSBM\\$\alpha=0.005$} &  \SetCell[c=1]{l,m}\textit{everything truly interesting happens in this book .} & \SetCell[c=1]{l,m}\textit{not fit for my baby : this product is junk !!} \\
        \SetCell[c=1]{c,m}{CSBM\\$\alpha=0.01$} &  \SetCell[c=1]{l,m}\textit{everything truly interesting happens in this book .} & \SetCell[c=1]{l,m}\textit{not fit for my baby : this product is bad !!} \\
        Del.\&Ret.\textsuperscript{$\dagger$} &  \SetCell[c=1]{l,m}\textit{nothing truly interesting happens in this book .} &   \SetCell[c=1]{l,m}\textit{my mom was annoyed with my health service is no notice .} \\
        BST\textsuperscript{$\dagger$} & \SetCell[c=1]{l,m}\textit{very good for the best .} &   \SetCell[c=1]{l,m}\textit{bad customer service to say the food , and it is n't .} \\
        FGIM\textsuperscript{$\dagger$} &  \SetCell[c=1]{l,m}\textit{nothing truly interesting happens in this book make it casual and spot .} &   \SetCell[c=1]{l,m}\textit{do not buy my phone : this bad crap was worst than it ?} \\
        PST\textsuperscript{$\dagger$} & \SetCell[c=1]{l,m}\textit{haha truly interesting happens in this book .} &  \SetCell[c=1]{l,m}\textit{uninspired .} \\
        $\text{SCT}_1$\textsuperscript{$\dagger$} &  \SetCell[c=1]{l,m}\textit{in this book is truly a really great book .} &  \SetCell[c=1]{l,m}\textit{not good for my baby : this product is great ! ! ! ! ! ! ! !} \\ 
        \bottomrule
    \end{tblr}}
\end{table}

\section{Practical Details}

\subsection{Construction and Selection of Reference Processes $q^{\text{ref}}$}
\label{apx:reference}
\paragraph{Construction of $q^{\text{ref}}$.} The article touches only briefly on how the reference processes $q^{\text{ref}}$ are built. In the current scheme, $q^{\text{ref}}$ is assembled by chaining intermediate transition probabilities $q^{\text{ref}}(x_{t_n} | x_{t_{n-1}})$. Consequently, the full end-to-end transition $q^{\text{ref}}(x_1 | x_0)$ varies with the choice of $\alpha$, the transition matrix $Q_n$, and the discretizations level $N$, rather than remaining fixed across settings. Due to this, for example, the increasing number of steps $N$ forces us to choose a smaller $\alpha$. If $\alpha$ remains too large, the overall transition probability $q^{\text{ref}}(x_1 | x_0)$ converges to the stationary distribution, making every start state equally likely to reach every end state.  A uniform distribution is not inherently wrong, but it defeats our aim, as we want $\alpha$ to control the overall stochasticity in the process. Thus, building a non-uniform, non-Gaussian $q^{\text{ref}}$ is considerably more challenging, prompting us to explore new construction strategies in the future.

\vspace{-1mm}
\paragraph{Selection of $\alpha$.} Across many experiments, we observed a pattern for choosing $\alpha$. Overall, the general idea follows the same intuition as choosing $\epsilon$ in continuous SB methods \citep{shi2023diffusion, gushchin2024adversarial}. Specifically, lower values of $\alpha$ lead to less stochasticity in the trajectories, resulting in higher similarity to the input data but a lower-quality approximation of the target distribution. At very low values, the model may collapse due to insufficient stochasticity. Conversely, higher values of $\alpha$ introduce more variability, reducing similarity to the initial data. Beyond a certain point, excessively large values $\alpha$ make the model difficult to train, leading to a drop in both quality and similarity. Unfortunately, the effective range of these behaviors is highly dependent on the dataset and the chosen reference process. Nonetheless, we provide reasonable baseline values from which one can begin and adjust as needed.

\vspace{-1mm}
\subsection{Loss Function of CSBM} 
\label{apx:explicit_loss}
In this section, we outline the optimization procedure for the parameterization in \eqref{eq:parameterization}, obtained by substituting $m = q_{\theta}$ into \eqref{eq:decomp}. Following \citep{austin2021structured}, we parameterize the model to predict the terminal point $x_{1}$ or $x_{0}$ for the forward or backward reparameterization, respectively, and adopt a hybrid loss that sums the base loss with the loss $L_{\text{simple}}$, scaled by a weighting factor $\lambda$. The resulting training objective is therefore given by:
\begin{multline}
    \label{eq:forward-loss}
    L(\theta) = \mathbb{E}_{q(x_0,x_1)}\Bigg[\sum_{n=1}^{N}\mathbb{E}_{q^{\textup{ref}}(x_{t_{n-1}} | x_0, x_1)} \\ 
    \KL{q^{\textup{ref}}(x_{t_{n}}|x_{t_{n-1}},x_1)}{\mathbb{E}_{\widetilde{q}_{\theta}(\widetilde{x}_1 | x_{t_{n-1}})}[q^{\textup{ref}}(x_{t_{n}}|x_{t_{n-1}},\widetilde{x}_1)]} - \lambda\overbrace{\log \widetilde{q}_{\theta}(\widetilde{x}_1 | x_{t_{n-1}})}^{L_{\text{simple}}} \\ 
    -\mathbb{E}_{q^{\textup{ref}}(x_{t_N} | x_0, x_1)}\left[\log \widetilde{q}_\theta(x_1 | x_{t_N})\right]\Bigg].
\end{multline}

Since the backward decomposition of $m$ also holds for Proposition \ref{eq:decomp}, we can apply a similar parametrization. In this case, we use a neural network with parameters $\eta$ to predict $x_0$:
\begin{multline}
    \label{eq:backward-loss}
    L(\eta) = \mathbb{E}_{q(x_0,x_1)}\Bigg[\sum_{n=2}^{N+1}\mathbb{E}_{q^{\textup{ref}}(x_{t_{n}} | x_0, x_1)} \\ 
    \KL{q^{\textup{ref}}(x_{t_{n-1}}|x_{t_{n}},x_0)}{\mathbb{E}_{\widetilde{q}_{\eta}(\widetilde{x}_0 | x_{t_{n}})}[q^{\textup{ref}}(x_{t_{n-1}}|x_{t_{n}},\widetilde{x}_0)]} - \lambda\overbrace{\log \widetilde{q}_{\eta}(\widetilde{x}_0 | x_{t_{n}})}^{L_{\text{simple}}} \\ 
    -\mathbb{E}_{q^{\textup{ref}}(x_{t_1} | x_0, x_1)}\left[\log \widetilde{q}_\eta(x_0 | x_{t_1})\right]\Bigg].
\end{multline}

For further details on the training process, we refer the reader to \citep{austin2021structured}.

\subsection{Training Aspects}  
\label{apx:aspects}
For the implementation of the training logic, we use the official D3PM repository \citep{austin2021structured} as a reference:
\begin{center}
    \url{https://github.com/google-research/google-research/tree/master/d3pm}
\end{center}

\paragraph{Shared Aspects.}  For all experiments, we use the AdamW optimizer with fixed betas of $0.95$ and $0.99$. Additionally, we apply Exponential Moving Average (EMA) smoothing to stabilize training and enhance final model performance. The EMA decay rate is consistently tuned across all experiments and set to $0.999$, except for the Colored MNIST experiment, where it is set to $0.9999$. For all experiments, we set the weighting factor of $L_\text{simple}$ to $0.001$.

For the 2D and colored MNIST experiment, we follow the preprocessing approach from \citep{austin2021structured}, where the logits of $q_{\theta}(\widetilde{x}_1 | x_{t_{n-1}})$ are modeled directly as the output of a neural network.

Notably, various previous works have introduced different initial couplings $q^0(x_0, x_1)$, such as the standard independent coupling $p_0(x_0) p_1(x_1)$ \citep{shi2023diffusion, gushchin2024adversarial}, couplings derived from a reference process, e.g., $p_0(x_0) q^{\text{ref}}(x_1 | x_0)$ \citep{shi2023diffusion}, and mini-batch OT couplings referred as MB, i.e., discrete Optimal Transport solved on mini-batch samples \citep{tong2024simulation}. For a more comprehensive overview of coupling strategies, see \citep{kholkin2024diffusion}. In this work, we focus exclusively on the independent and mini-batch initial coupling.

\begin{table}[t]
    \centering
    \small
    \caption{Hyperparameters for experiments. Lr denotes the learning rate, and \textit{m} represents millions. Params indicate the number of model parameters, where for the CelebA dataset, the first value corresponds to the model and the second to the VQ-GAN.}
    \begin{tblr}{colspec={Q[c,m]|Q[c,m]|Q[c,m]|Q[c,m]|Q[c,m]|Q[c,m]|Q[c,m]|Q[c,m]|Q[c,m]}}
        \toprule
        Experiment & {Initial \\ coupling} & {D-IMF \\ outer iterations} & {D-IMF=1 \\ grad updates} & {D-IMF \\ grad updates} & $N$ & {Batch \\ size} & Lr & Params \\
        \midrule
        2D & Ind & $10$ & $400\,000$ & $40\,000$ & $10$ & $512$ & $0.0004$ &  46588 \\
        \hline
        Colored MNIST & MB & $3$ & $200\,000$ & $40\,000$ & {$2, 4,$ \\ $10, 25,$ \\ $50, 100$} & $128$ & $0.0002$  & 34m \\
        \hline
        CelebA & Ind & $4$ & $800\,000$ & $40\,000$ & $100$ & $32$ & $0.0004$  & {93m \\ + \\ 70m} \\
        \hline
        Amazon Reviews & Ind & $5$ & $800\,000$ & $40\,000$ & $100$ & $32$ & $0.0004$  & 100m \\
        \bottomrule
    \end{tblr}
    \label{tab:hyperparams}
\end{table}

\paragraph{Experiment-specific Aspects.} For the \textbf{2D experiment} (\wasyparagraph\ref{sec:toy-exp}), we use a simple MLP model with hidden layers of size $[128, 128, 128]$ and ReLU activations. To condition on time, we use a simple lookup table, i.e., an embedding layer of size $2$.

For the \textbf{colored MNIST experiment} (\wasyparagraph\ref{sec:cmnist-exp}), we follow \citep{austin2021structured} and use an architecture based on a PixelCNN++ backbone \citep{salimans2016improved}, utilizing a U-Net \citep{ronneberger2015u} with a ResNet-like structure. The model operates at four feature map resolutions, with two convolutional residual blocks per resolution level and a channel multiplier of $(1, 2, 2, 2)$. At the $16 \times 16$ resolution level, a self-attention block is incorporated between the convolutional blocks. For time encoding, we apply Transformer sinusoidal position embeddings to each residual block. We train the model on a training subset of size $60\,000$ and generate images from the hold-out set.

For the \textbf{CelebA experiment} (\wasyparagraph\ref{sec:celeba-exp}), we employ VQ-Diffusion \citep{gu2022vector}, which consists of two models: VQ-GAN \citep{esser2021taming} and a transformer-based diffusion model. The VQ-GAN component is trained using the official repository:
\begin{center} 
    \url{https://github.com/CompVis/taming-transformers}.
\end{center}

We slightly modify the experimental setup of unconditional generation for CelebA-HQ from \citep{esser2021taming} by reducing the number of resolution levels to three, with scaling factors of $(1, 2, 4)$. This adjustment accounts for our use of CelebA at $128 \times 128$ resolution, compared to $256 \times 256$ in CelebA-HQ. The discrete diffusion model is adopted from:
\begin{center} 
    \url{https://github.com/microsoft/VQ-Diffusion}.
\end{center}

Our diffusion model consists of multiple transformer blocks, each incorporating full attention and a feed-forward network (FFN). We follow the small model configuration from \citep{gu2022vector}, which consists of 18 transformer blocks with an increased channel size of 256. The FFN is implemented using two convolutional layers with a kernel size of 3, and the channel expansion rate is set to 2. Additionally, we inject time step information through the AdaLN operator.

We train the model on $162\,770$ pre-quantized images of celebrities. For evaluation, we compute FID and CMMD using $11\,816$ hold-out images to ensure consistency with the evaluation protocol from \citep{gushchin2024adversarial}. Likewise, the images presented in the main text of the paper are generated using this hold-out set.

For the \textbf{Amazon experiment} (Appendix \ref{apx:amazon-exp}), we train a unigram SentencePiece tokenizer \citep{kudo2018sentencepiece} that includes explicit start-of-sentence (\texttt{<s>}) and padding (\texttt{<pad>}) tokens, following the procedure of \citep{austin2021structured}.
The backbone is the DiT model \citep{peebles2023scalable}, with the implementation available at:

\begin{center} 
    \url{https://github.com/kuleshov-group/mdlm}.
\end{center}

We employ the ``small'' variant with 12 transformer blocks, each with a hidden size of 768 and 12 attention heads. Every block contains multi-head self-attention, rotary positional embeddings, and an MLP with a dropout rate of 0.1. Noise-level information is injected via a 128-dimensional AdaLN modulation vector. The model is trained on $104\,000$ pre-tokenized reviews and evaluated on $2\,000$ reviews from the held-out test set. The rest hyperparameters are presented in Table \ref{tab:hyperparams}.

\paragraph{Computational Time.} Training the 2D experiment requires several hours on a single A100 GPU. The colored MNIST experiment takes approximately two days to train using two A100 GPUs. The most computationally demanding task, the CelebA and Amazon Reviews experiments, requires around five days of training on four A100 GPUs.

\subsection{Additional Images}
\label{apx:additional-images}
\begin{figure}[h]
    \centering
    \begin{subfigure}[b]{0.12\textwidth}
        \centering
        \includegraphics[width=0.58\linewidth]{cmnist/cmnist_pics_orig_vert_backward.png}
        \caption{$x \sim p_1$.}
    \end{subfigure}
    \begin{subfigure}[b]{0.2684\textwidth}
        \centering
        \includegraphics[width=0.995\linewidth]{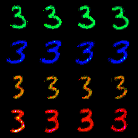}
        \caption{$N=2$.}
    \end{subfigure}
    \begin{subfigure}[b]{0.2684\textwidth}
        \centering
        \includegraphics[width=0.995\linewidth]{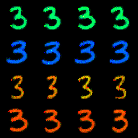}
        \caption{$N=4$.}
    \end{subfigure}
    \begin{subfigure}[b]{0.2684\textwidth}
        \centering
        \includegraphics[width=0.995\linewidth]{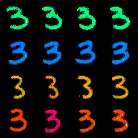}
        \caption{$N=10$.}
    \end{subfigure}
    \\
    \begin{subfigure}[b]{0.12\textwidth}
        \centering
        \includegraphics[width=0.58\linewidth]{cmnist/cmnist_pics_orig_vert_backward.png}
        \caption{$x \sim p_1$.}
    \end{subfigure}
    \begin{subfigure}[b]{0.2684\textwidth}
        \centering
        \includegraphics[width=0.995\linewidth]{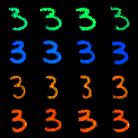}
        \caption{$N=25$.}
    \end{subfigure}
    \begin{subfigure}[b]{0.2684\textwidth}
        \centering
        \includegraphics[width=0.995\linewidth]{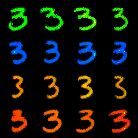}
        \caption{$N=50$.}
    \end{subfigure}
    \begin{subfigure}[b]{0.2684\textwidth}
        \centering
        \includegraphics[width=0.995\linewidth]{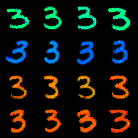}
        \caption{$N=100$.}
    \end{subfigure}
    \caption{\centering Results of colored digits unpaired translation ``2'' $\rightarrow$ ``3'' learned by our CSBM algorithm with reference process $q^{\text{gauss}}$ and varying number of time moments $N$.}
    \label{fig:cmnist-images_backward}
\end{figure}

\begin{figure*}[h]
    \centering
    \begin{minipage}{0.01\textwidth}
        \centering
        \vspace{-60mm}
        \rotatebox{90}{\textbf{Low stochasticity}}
    \end{minipage}
    \begin{subfigure}[b]{0.11\textwidth}
        \centering
        \includegraphics[width=0.652\linewidth]{celeba/celeba_128_b_start_data_samples.png}
        \caption{\centering ${x\sim p_1}$.}
    \end{subfigure}
    \begin{subfigure}[b]{0.285\textwidth}
        \centering
        \includegraphics[width=0.995\linewidth]{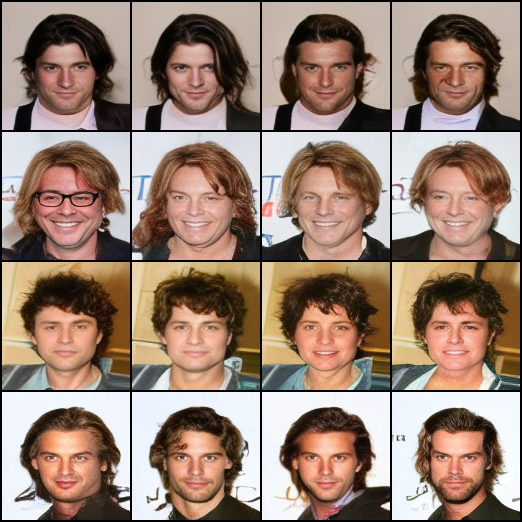}
        \caption{\centering CSBM (\textbf{ours})}
    \end{subfigure}
    \begin{subfigure}[b]{0.285\textwidth}
        \centering
        \includegraphics[width=0.995\linewidth]{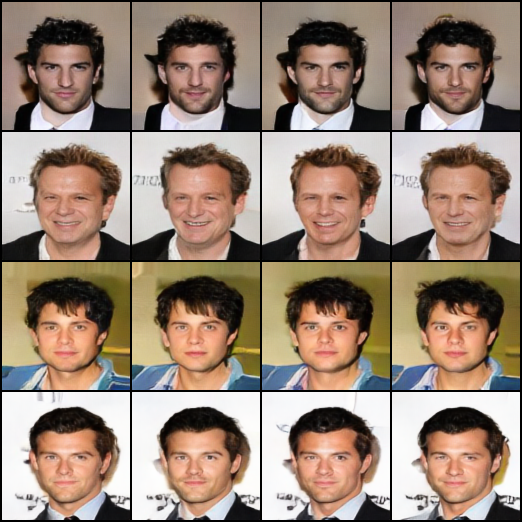}
        \caption{\centering ASBM \citep{gushchin2024adversarial}}
    \end{subfigure}
    \begin{subfigure}[b]{0.285\textwidth}
        \centering
        \includegraphics[width=0.995\linewidth]{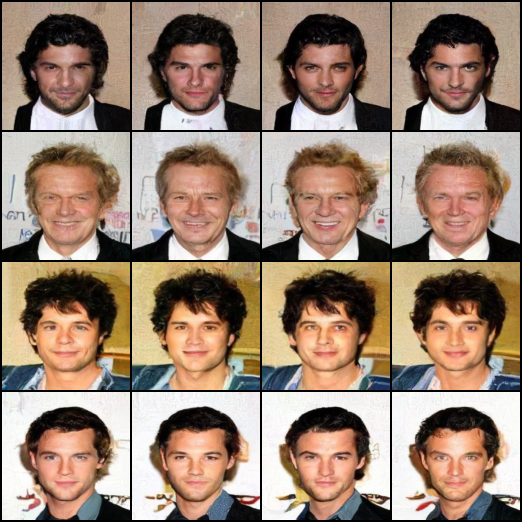}
        \caption{\centering DSBM \citep{shi2023diffusion}}
    \end{subfigure}
        \begin{minipage}{0.01\textwidth}
        \centering
        \vspace{-60mm}
        \rotatebox{90}{\textbf{High stochasticity}}
    \end{minipage}
    \begin{subfigure}[b]{0.11\textwidth}
        \centering
        \includegraphics[width=0.652\linewidth]{celeba/celeba_128_b_start_data_samples.png}
        \caption{\centering ${x\sim p_1}$}
    \end{subfigure}
    \begin{subfigure}[b]{0.285\textwidth}
        \centering
        \includegraphics[width=0.995\linewidth]{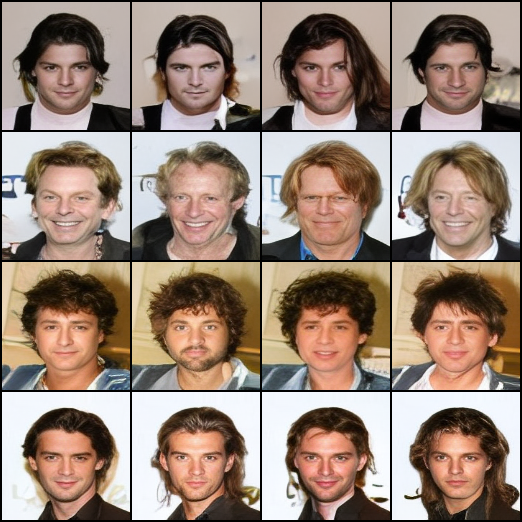}
        \caption{CSBM (\textbf{ours})}
    \end{subfigure}
    \begin{subfigure}[b]{0.285\textwidth}
        \centering
        \includegraphics[width=0.995\linewidth]{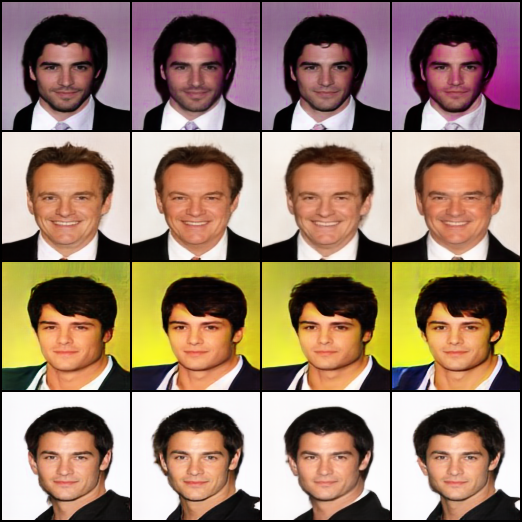}
        \caption{\centering ASBM \citep{gushchin2024adversarial}}
    \end{subfigure}
    \begin{subfigure}[b]{0.285\textwidth}
        \centering
        \includegraphics[width=0.995\linewidth]{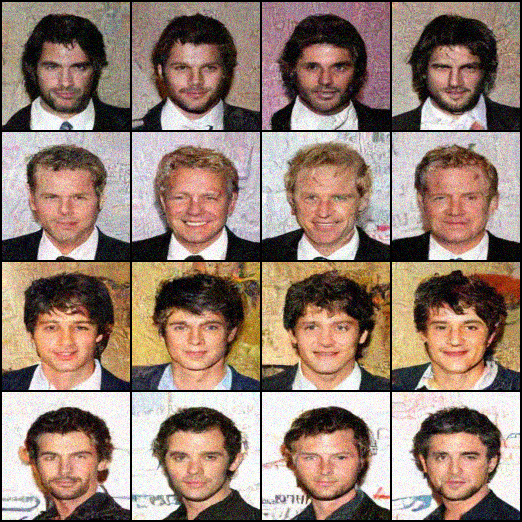}
        \caption{\centering DSBM \citep{shi2023diffusion}}
    \end{subfigure}

    \caption{\centering Comparison of \textit{female} $\rightarrow$ \textit{male} translation on the CelebA $128 \times 128$ dataset using CSBM (ours), ASBM, and DSBM. The low-stochasticity setting for CSBM corresponds to $\alpha=0.005$, while the high-stochasticity setting corresponds to $\alpha=0.01$. The stochasticity parameters for ASBM and DSBM are taken from \citep{gushchin2024adversarial}.}
    \label{fig:celeba_images_backward}
\end{figure*}  

\begin{figure*}
    \includegraphics[width=0.995\linewidth]{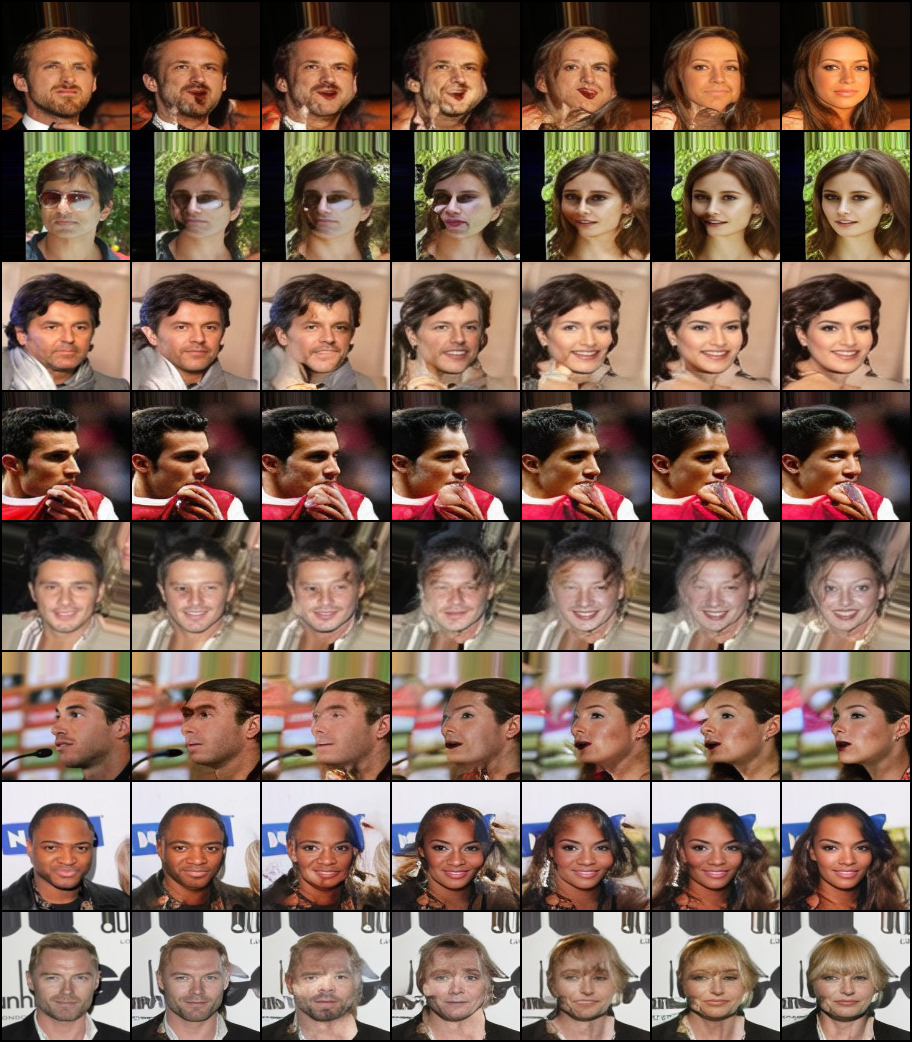}
    \caption{\centering \textit{male} $\rightarrow$ \textit{female} translation trajectories on the CelebA $128 \times 128$ dataset using CSBM with $\alpha = 0.01$. Each column corresponds to time moments $0$, $10$, $25$, $50$, $75$, $90$, and $101$.}
\end{figure*}

\begin{figure*}
    \includegraphics[width=0.995\linewidth]{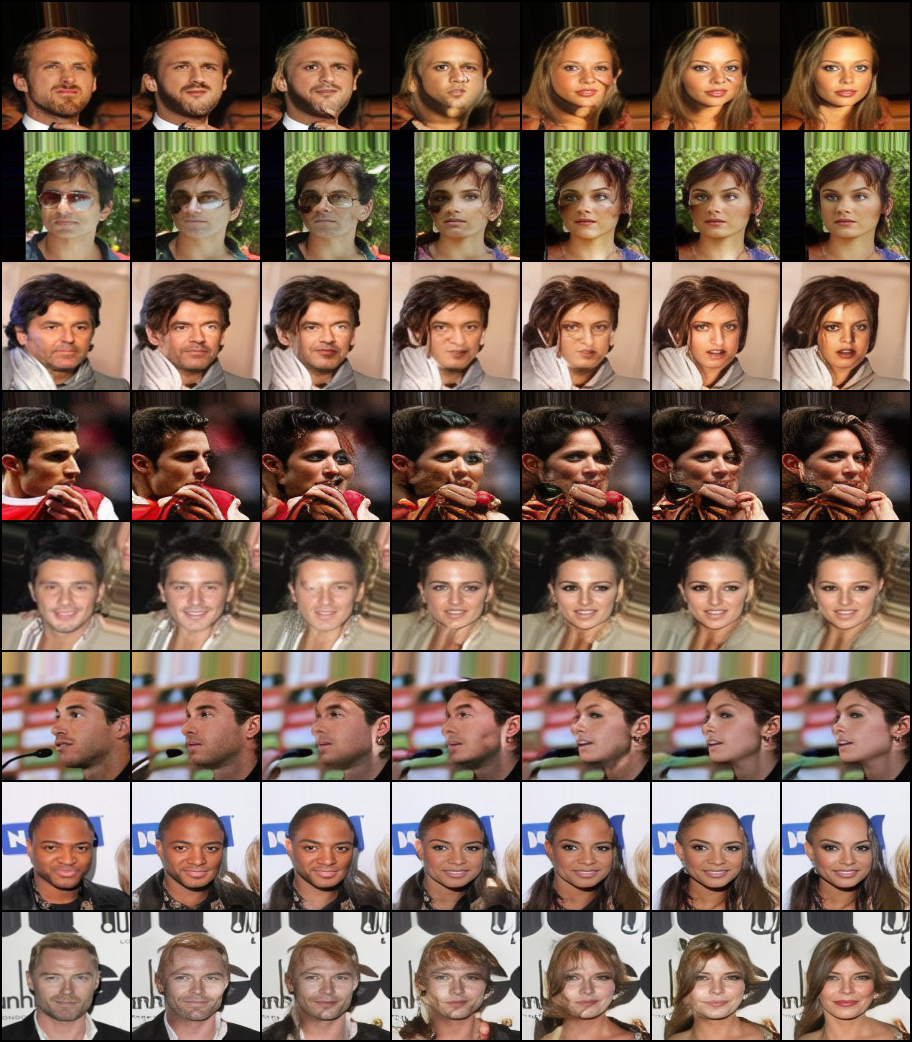}
    \caption{\centering \textit{male} $\rightarrow$ \textit{female} translation trajectories on the CelebA $128 \times 128$ dataset using CSBM with $\alpha = 0.005$. Each column corresponds to time moments $0$, $10$, $25$, $50$, $75$, $90$, and $101$.}
\end{figure*}

\begin{figure*}
    \includegraphics[width=0.995\linewidth]{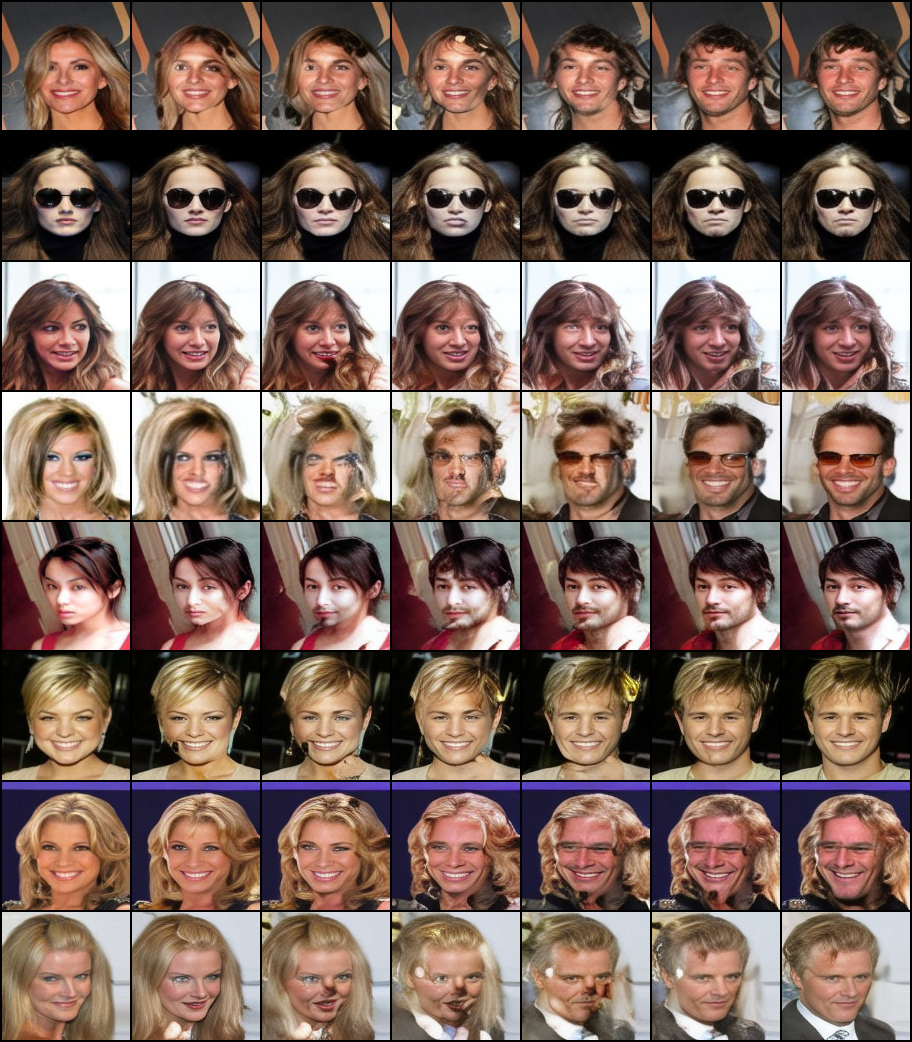}
    \caption{\centering \textit{female} $\rightarrow$ \textit{male} translation trajectories on the CelebA $128 \times 128$ dataset using CSBM with $\alpha = 0.01$. Each column corresponds to time moments $0$, $10$, $25$, $50$, $75$, $90$, and $101$.}
\end{figure*}

\begin{figure*}
    \includegraphics[width=0.995\linewidth]{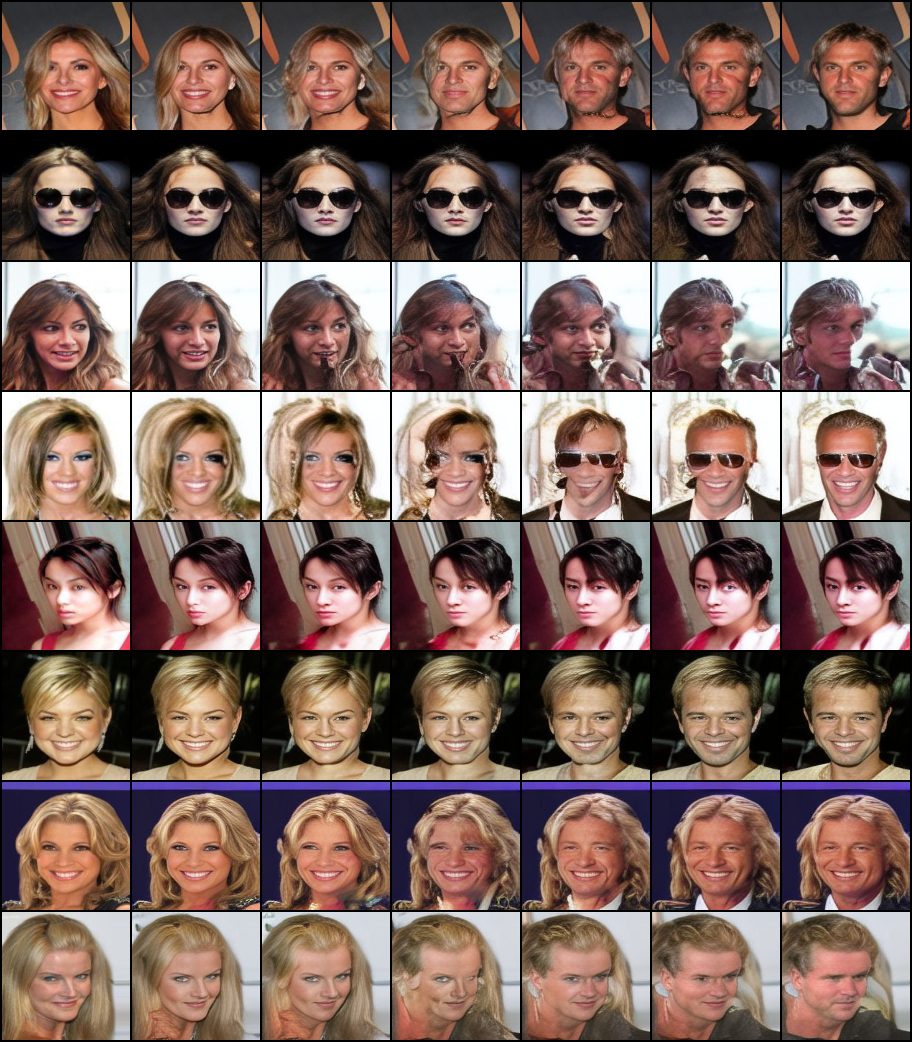}
    \caption{\centering \textit{female} $\rightarrow$ \textit{male} translation trajectories on the CelebA $128 \times 128$ dataset using CSBM with $\alpha = 0.005$. Each column corresponds to time moments $0$, $10$, $25$, $50$, $75$, $90$, and $101$.}
\end{figure*}

\end{document}